\documentclass{article}
\usepackage{iclr2022_conference,times}


\usepackage{amsmath,amsfonts,bm}









\def\eqref#1{equation~\ref{#1}}









\def\1{\bm{1}}










\DeclareMathAlphabet{\mathsfit}{\encodingdefault}{\sfdefault}{m}{sl}
\SetMathAlphabet{\mathsfit}{bold}{\encodingdefault}{\sfdefault}{bx}{n}













\usepackage[utf8]{inputenc}
\usepackage[T1]{fontenc}
\usepackage{hyperref}
\usepackage{url}
\usepackage{booktabs}
\usepackage{amsfonts}
\usepackage{nicefrac}
\usepackage{microtype}
\usepackage{xcolor}

\usepackage{graphicx}
\usepackage{soul}
\usepackage{amssymb}
\usepackage{amsmath}
\usepackage{amsthm}
\usepackage[ruled,vlined,linesnumbered]{algorithm2e}
\usepackage{enumitem}

\usepackage{multirow}
\usepackage{subcaption}
  \captionsetup[subfigure]{width=0.9\textwidth}

\author{Kaiyu Yang \\
Department of Computer Science\\
Princeton University\\
\texttt{kaiyuy@cs.princeton.edu} \\
\And
Jia Deng \\
Department of Computer Science\\
Princeton University\\
\texttt{jiadeng@princeton.edu} \\
}

\newcommand{\smallsec}[1]{\noindent \textbf{#1.}}

\definecolor{greenforassumption}{RGB}{226, 240, 217}
\definecolor{blueforgoal}{RGB}{222, 235, 247}
\definecolor{orangeforrule}{RGB}{251, 229, 214}
\DeclareRobustCommand{\hlgreen}[1]{{\sethlcolor{greenforassumption}\hl{#1}}}
\DeclareRobustCommand{\hlblue}[1]{{\sethlcolor{blueforgoal}\hl{#1}}}
\DeclareRobustCommand{\hlorange}[1]{{\sethlcolor{orangeforrule}\hl{#1}}}

\iclrfinalcopy

\title{Learning Symbolic Rules for Reasoning in Quasi-Natural Language}

\begin{document}

\maketitle

\begin{abstract}
Symbolic reasoning, rule-based symbol manipulation, is a hallmark of human intelligence.  However, rule-based systems have had limited success competing with learning-based systems outside formalized domains such as automated theorem proving. We hypothesize that this is due to the manual construction of rules in past attempts. In this work, we ask how we can build a rule-based system that can reason with natural language input but without the manual construction of rules. We propose MetaQNL, a ``Quasi-Natural'' language that can express both formal logic and natural language sentences, and MetaInduce, a learning algorithm that induces MetaQNL rules from training data consisting of questions and answers, with or without intermediate reasoning steps. Our approach achieves state-of-the-art accuracy on multiple reasoning benchmarks; it learns compact models with much less data and produces not only answers but also checkable proofs. Further, experiments on a real-world morphological analysis benchmark show that it is possible for our method to handle noise and ambiguity. Code will be released at \url{https://github.com/princeton-vl/MetaQNL}.

\end{abstract}

\section{Introduction}

Symbolic reasoning---rule-based symbol manipulation---is a core component of human intelligence~\citep{mercier2017enigma}. 
It has been a core part of computer science research, and has achieved significant success in domains such as software  verification~\citep{darvas2005theorem} and theorem proving~\citep{kovacs2013first, mccune1997solution}. However, such success has been restricted to domains amenable to rigid, precise formalization. It remains a challenge how to translate such success into ``informal'' domains such as reasoning with common sense knowledge and natural language input. Prior attempts to build rule-based systems, which rely on manually constructed rules, have achieved limited success and tended to produce brittle systems.

Deep learning provides an attractive alternative that can easily sidestep the question of representation. A deep network can be trained to perform a reasoning task by directly predicting the answer without explicit symbol manipulation~\citep{clark2020transformers}. 
However, deep networks can require a large amount of training data and can suffer from poor generalization. More importantly, unlike symbolic systems, a deep network is a black box that is hard to interpret, inspect, and verify. Such lack of interpretability can be undesirable in certain applications, especially those critical to safety and security. 

In this work, we ask how to build a rule-based system that reasons symbolically but can work with natural language and handle domains difficult to formalize. Such a system would perform reasoning by explicit symbol manipulation based on known rules, therefore more interpretable and verifiable, but at the same time flexible enough to handle natural language input. 

At a glance, this may appear a large departure from the conventional wisdom that learning-based systems, particularly deep networks, are far superior to rule-based systems, as history has demonstrated repeatedly. However, we hypothesize that this conventional wisdom is incorrect because it assumes a false dichotomy between using learning and using rules;  rule-based systems underperformed not because they were rule-based, but because it is difficult to construct rules manually. Further, we hypothesize learning rules from data is key to building effective rule-based systems, but it may require a different kind of learning than gradient descent. 

The goal of this work is thus to develop a method that automatically learns symbolic rules from data to enable rules-based reasoning with natural language. This poses two main questions. First, what is the system of rules---the basic structures that define what symbols and manipulations allowed---such that it is compatible with not only formal logic but also natural language? Second, what is the learning algorithm that induces a set of rules from training data? 

In this work, we take initial steps toward answering both questions. We propose MetaQNL, a formal symbolic system we call a ``Quasi-Natural Language'', which is compatible with not only rigorous logical inference but also natural language expressions. We also propose MetaInduce, a learning algorithm that induces MetaQNL rules from training data that consists of questions and answers, with or without intermediate reasoning steps.

\begin{figure}[t]
  \centering
   \includegraphics[width=1.0\linewidth]{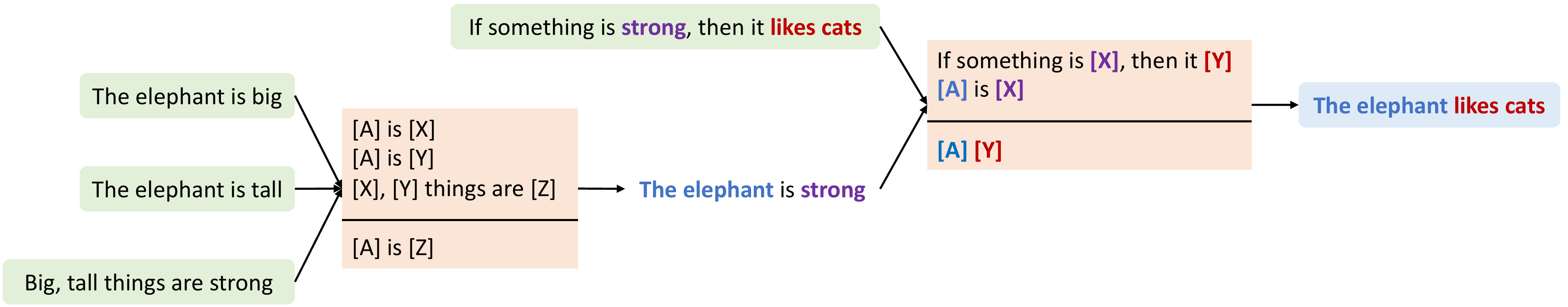}
  \caption{An example proof with 4 \hlgreen{assumptions}, 1 \hlblue{goal}, and 2 \hlorange{rule} applications. Each rule have multiple premises and one conclusion. Both the premises and the conclusion can have variables that bind to concrete sentences when the rule is applied.}
  \label{fig:formal_system}
\end{figure}
\smallsec{MetaQNL: a Symbolic System in Quasi-Natural Language}
In MetaQNL, a \emph{sentence} is a sequence of words and variables (``\texttt{The elephant is [X]}''), including ordinary English sentences without variables. A \emph{rule} has multiple sentences as premises (``\texttt{The elephant is [X]}'', ``\texttt{If something is [X] then it [Y]}'') and one sentence as the conclusion (``\texttt{The elephant [Y]}''). When applying the rule in reasoning, variables are substituted with concrete sentences (\texttt{[X]} $\rightarrow$ \texttt{strong}, \texttt{[Y]} $\rightarrow$ \texttt{likes cats}). Therefore, rules capture abstract knowledge independent of specific instances---the above rule holds whether \texttt{[Y]} is ``\texttt{likes cats}'' or ``\texttt{is sleepy}''. Such abstraction is essential for reasoning in both humans and machines~\citep{marcus2020insights}. 

Fig.~\ref{fig:formal_system} illustrates how sentences and rules are used in reasoning. Starting from known sentences (\hlgreen{assumptions}), we apply \hlorange{rules} to derive new sentences until the \hlblue{goal} is reached. At each step, we substitute the variables in a rule with concrete sentences. This process resembles Metamath~\citep{metamath}, a formal language developed for formalizing mathematical proofs, where each step also consists of selecting a theorem and instantiating it with a suitable substitution. So we refer to the reasoning process as \emph{theorem proving} and the result in Fig.~\ref{fig:formal_system} as a \emph{proof}. It is worth noting that reasoning in MetaQNL is interpretable by design: it is transparent about what rules are assumed; it produces not only an answer, but also a proof that can be mechanically checked against the rules.

\hlgreen{Assumptions} and the \hlblue{goal} are usually given when applying our system to a specific task. To solve the task, two issues remain: (1) \emph{Rule induction}: What is the set of \hlorange{rules}? (2) \emph{Theorem proving}: How to apply the rules to find a proof? Theorem proving has been studied extensively in classical AI~\citep{robinson2001handbook} and more recently with deep learning~\citep{alemi2016deepmath,yang2019coqgym,bansal2019learning}, and we can adapt existing algorithms such as forward chaining and backward chaining~\citep{russell2002artificial}. In this work, we simply use existing provers and focus instead on the more challenging problem of rule induction.

\smallsec{MetaInduce: an Algorithm to Learn MetaQNL Rules}
Rule induction can be formulated as a discrete optimization problem that seeks a minimum set of rules that are consistent with training examples. Note that it is important to seek a small number of rules because we always have a trivial solution that consists of one rule per example but is unlikely to generalize. This optimization is challenging due to the discrete, combinatorial search space.

We introduce MetaInduce, a general method for learning MetaQNL rules. It encodes the problem as a maximum satisfiability (MAX-SAT) problem, which can be solved efficiently by off-the-shelf solvers~\citep{de2008z3}. Our method consists of 3 steps. First, given a training example, a \emph{rule proposer} proposes a set of concrete rules (rules without variables) as candidates. This set can be overcomplete and inaccurate. These rules are used to prove the example using existing provers such as forward/backward chaining. Second, we generate abstract rules from concrete rules via a symbolic procedure called anti-unification~\citep{plotkin1970note,kutsia2014anti}. Third, we encode the proof paths in MAX-SAT and solve for a subset of all rules using a MAX-SAT solver.

\smallsec{Overview of Results} We benchmark our method on 3 tasks: learning compositional instructions, logical reasoning, and morphological analysis. For compositional instructions, our method not only achieves 100\% accuracy on MiniSCAN~\citep{lake2019human} and SCAN~\citep{lake2018generalization}, but also recovers the ground truth rules. For logical reasoning, it achieves state of the art on RuleTaker~\citep{clark2020transformers}. For morphological analysis, it learns morphological rules from real-world data and is competitive with neural seq2seq models in some languages. Compared to existing methods, our approach learns compact models with much less data, and produces not only answers but also checkable proofs. On RuleTaker, our approach learns a model that has only 2869 symbols but is competitive with a prior approach that uses a neural network with 11 billion parameters.

\section{Related Work}

\smallsec{Symbolic Reasoning}
Symbolic reasoning has been studied extensively in classical AI, such as theorem proving~\citep{kovacs2013first,robinson2001handbook}. An open problem is to perform symbolic reasoning in domains without a natural formalization, such as natural images or texts. One common approach is to manually construct a formal system (e.g., based on first-order logic with manually defined functions and predicates), then perform semantic parsing to translate images or texts into formalized statements as input to a reasoning module operating in a clean formal world. 

For example, to judge whether one statement implies another, \citet{mineshima2015higher} use a semantic parser to convert both statements into higher-order logic (with predefined predicates), and then run an automated theorem prover.  Semantic parsing is still far from reliable; therefore, researchers have developed techniques for learning it jointly with the reasoning module~\citep{mao2018neuro,saparov2021generative,dai2019bridging,li2020closed}. In contrast, our approach does not require a semantic parser, because rules in MetaQNL are directly applicable to natural language. 

Natural Logic~\citep{mcallester1993taxonomic,maccartney2007natural} is a class of symbolic systems defined using the syntax of natural language, bypassing semantic parsing. Compared to our system, Natural Logic is more specialized because it is a specific logic committed to predefined rules, which restricts the type of reasoning it can perform to monotonicity reasoning~\citep{icard2014recent}. In contrast, we have no such restrictions because is not a specific logic but a meta-language with minimal structure such that it can instantiate various types of reasoning, just as MetaMath is a meta-language that can describe a variety of mathematical logics~\citep{metamath}.

None of these works discussed so far learn rules from data; they instead use a predefined formal system that is already specialized and already encodes a substantial amount of prior knowledge. In contrast, MetaQNL is almost ``knowledge-free'' in the sense that it imposes the weakest possible structure on the permitted rules and lets the specific rules emerge from data through learning.

\smallsec{Reasoning with Neural Networks}
Neural networks can perform ``soft'' reasoning in the space of continuous vectors without manipulating discrete symbols explicitly. Prior works have used transformer-based~\citep{vaswani2017attention} language models for soft reasoning~\citep{polu2020generative,saha2020prover,tafjord2020proofwriter,gontier2020measuring,talmor2020olmpics}. \citet{clark2020transformers} finetune a pretrained transformer to classify whether the goal is provable from the assumptions, encoding them as sentences in a constrained natural language. \citet{saha2020prover} and \citet{tafjord2020proofwriter} go one step further to generate proofs in addition to yes/no answers. \citet{bostrom2021flexible} generate conclusions from premises in unconstrained natural language.

Instead of using a generic transformer, researchers have also added inductive biases to the neural architecture. Many are inspired by symbolic reasoning and are often called \emph{neuro-symbolic architectures}.
\citet{rocktaschel2017end} introduce Neural Theorem Provers (NTPs). Given the assumptions and the goal in first-order logic, they use backward chaining to recursively construct a neural network. However, NTPs only work for formalized inputs and do not scale due to exponentially many proof paths in backward chaining. 
\citet{weber2019nlprolog} extend NTPs to natural language by extracting symbols from sentences using an off-the-shelf named entity recognizer.
\citet{minervini2020differentiable} make NTPs more scalable by dynamically pruning unpromising proof paths in backward chaining.

Researchers have also attempted to embed symbolic structures such as logic formulas into continuous vectors while preserving logical operations~\citep{smolensky1990tensor,grefenstette2013towards,kathryn2018tensorlog,lee2015reasoning,schlag2019enhancing}. 
For example, Neural Logic Machines~\citep{dong2018neural} is a neuro-symbolic architecture based on continuous approximation of logical inference. Predicates are represented as tensors; rules are neural operators that map tensors to tensors.

\citet{cingillioglu2020learning} propose an end-to-end neural architecture called unification networks to learn rules with variables from concrete examples. However, their system of rules is significantly less general than ours: their variables can only bind to a single word, whereas our variables bind to arbitrary sentences. In addition, their system does not support multistep chained reasoning. All reasoning is done in a single step: producing a conclusion in the form of an answer ("yes/no", a number, etc.) given a number of premises consisting of a question and a set of supporting facts.

Unlike these prior works, we learn symbolic rules instead of weights in a neural network. 
Further, during inference, we generate symbolic proofs whose correctness with respect to the induced rules is guaranteed and can be mechanically checked. \citet{saha2020prover} and \citet{tafjord2020proofwriter} also generate proofs, but their proofs are natural language texts whose correctness is neither guaranteed nor mechanically checkable---their approach trains neural networks to directly predict both answers and proofs, but does not expose a system of rules against which a proof can be checked.

\smallsec{Rule Induction}
Inductive logic programming (ILP) learns rules in first-order logic programs such as Prolog and Datalog~\citep{plotkin1972automatic,muggleton1991inductive,cropper2020inductive}. Extending it to natural language is non-trivial---partially due to the need for a predefined ontology of objects and predicates, as well as a perfect semantic parser, both of which are infeasible. Unlike ILP, we learn rules in MetaQNL, which can express not only logic programs but also natural language sentences. And our experiments show that MetaQNL can solve tasks that are not easily solvable by ILP.

For learning rules, our MetaInduce algorithm draws inspiration from existing ILP approaches. They encode proofs as either a boolean satisfiability problem solvable by off-the-shelf SAT solvers~ \citep{raghothaman2019provenance} or a differentiable function amenable to gradient descent~\citep{yangdifferentiable2017,evans2018learning,si2019synthesizing}. Compared to these approaches, our rule space is different and more complex. Our rules consist of sentences with variables, whereas rules in ILP are typically Horn clauses in first-order logic. Further, ILP often imposes strong syntactic constraints on what rules are valid, e.g., using rule templates~\citep{evans2018learning,raghothaman2019provenance}, or restricting to binary predicates~\citep{evans2018learning}.
These constraints are critical to good performance but are domain-dependent and difficult to get right~\citep{cropper2020inductive}. Over-constraining the rule space makes the system less expressive, less generally applicable, and more brittle in the presence of noise. Another difference is that we minimize the number of rules in order to generalize, which is unnecessary for ILP due to stronger syntactic constraints. 

Our space of rules includes a rich hierarchy from abstract rules to concrete rules, making the search space much larger. In contrast,  most ILP works assume function-free first-order logic such as Datalog. Their variables can only be instantiated with concrete entities, making their rule space much simpler.

RNNLogic~\citep{qu2020rnnlogic} learns first-order rules for knowledge base completion. They generate rules using RNNs, which is feasible because they require that rules can be expressed as a sequence of predicates. The strong syntactic constraint makes it less suitable for more general reasoning.
Beyond first-order logic, \citet{nye2020learning} learn rules for a string rewriting system. MetaQNL is more general because it can be applied to not only string rewriting but also other forms of reasoning (see Sec.~\ref{sec:experiments}).

\section{MetaQNL: a Symbolic System in Quasi-Natural Language}

MetaQNL is quasi-natural because it has a formal syntax compatible with natural language. Like in natural language, a \emph{sentence} in MetaQNL is simply a sequence of tokens. There are 3 different types of tokens---words, variables, and special symbols. Taking the sentence ``\texttt{\$FALSE\$ The elephant likes [X]}'' as an example, ``\texttt{The}'', ``\texttt{elephant}'' and ``\texttt{likes}'' are words. MetaQNL treats words as symbols and does not assume any prior knowledge about their meaning. ``\texttt{[X]}'' is a variable---a placeholder that binds to concrete sentences in reasoning. ``\texttt{\$FALSE\$}'' is a special symbol. They are useful for encoding the structures of specific tasks, which will become more clear in Sec.~\ref{sec:experiments}. In this paper, we delimit special symbols with \$.
Sentences without variable are called \emph{concrete sentences}, e.g., ``\texttt{\$FALSE\$ The elephant likes cats}''.

\newtheorem{definition}{Definition}
\begin{definition}[Sentence]
Let $\Sigma_w, \Sigma_v, \Sigma_s$ be vocabularies of words, variables, and special symbols respectively; they are disjoint and countable. Let $\Sigma = \Sigma_w  \cup \Sigma_v \cup \Sigma_s$, then any $t \in \Sigma$ is a token. 
A sentence $s = (t_1, t_2, \dots, t_n) \in \Sigma^{+}$ is a non-empty sequence of tokens. A concrete sentence is a sentence without any variable, i.e., $\forall i, t_i \notin \Sigma_v$.
\end{definition}

MetaQNL expresses permitted reasoning steps through \emph{rules}. A rule has multiple sentences as its premises (``\texttt{The elephant [X]}'', ``\texttt{If something [X] then it [Y]}'') and one sentence as the conclusion (``\texttt{The elephant [Y]}''). Intuitively, the conclusion should follow from the premises regardless of what values the variables take. \emph{Concrete rules} are rules without variables.

\begin{definition}[Rule]
A rule takes the form of $p_1; p_2; \dots; p_n \vdash c$, where $p_1, p_2, \dots, p_n \in \Sigma^{+}$ are premises, and $c \in \Sigma^{+}$ is the conclusion. It is concrete if all premises and the conclusion are concrete.
\end{definition}

In reasoning, we instantiate rules with concrete rules by substituting all variables with concrete sentences. Given the rule $r_1 = $ ``\texttt{The elephant [X]; If something [X], then it [Y] $\vdash$ The elephant [Y]}'', we can instantiate it with the substitution $\{\texttt{[X]} \rightarrow \texttt{is strong}, \texttt{[Y]} \rightarrow \texttt{likes cats} \}$, deriving the concrete rule $r_2 = $ ``\texttt{The elephant is strong; If something is strong, then it likes cats $\vdash$ The elephant likes cats}''. In such cases, we say $r_1$ is more general than $r_2$, or vice versa, $r_2$ is an instance of $r_1$.

\begin{definition}[Substitution]
\label{dfn:subst}
Let $\Sigma^{+}_{-s} = (\Sigma_w \cup \Sigma_v)^{+}$ be the set of sentences with only words and variables (without special symbols). A substitution $\sigma$ is a function from $\Sigma_v$ to $\Sigma^{+}_{-s}$. Substitutions can be extended to be functions on tokens, sentences, and rules.
Given a token $t \in \Sigma$, applying the substitution $\sigma$ produces a sentence $\sigma t$.
\begin{equation*}
    \sigma t = \begin{cases}
    \sigma(t) & \text{if } t \in \Sigma_v, \\
    t & \text{if } t \notin \Sigma_v. \\
    \end{cases}
\end{equation*}
Given a sentence $s = (t_1, t_2, \dots, t_n)$, applying $\sigma$ produces $\sigma s = (\sigma t_1, \sigma t_2, \dots, \sigma t_n)$. Given a rule $r = p_1; p_2; \dots; p_n \vdash c$, applying $\sigma$ produces $\sigma r = \sigma p_1; \sigma p_2; \dots; \sigma p_n \vdash \sigma c$.
\end{definition}
We are abusing notations to treat a token and a single-token sentence interchangeably. Also, $(s_1, s_2, \dots, s_n)$ denotes concatenation when $s_i$ are sentences. Substitution is defined as a function on all variables $\Sigma_v$, but in practice it only involves a few. For example, the substitution $\sigma = \{\texttt{[X]} \rightarrow \texttt{is strong}, \texttt{[Y]} \rightarrow \texttt{likes cats} \}$ only involves two variables. In such cases, we think of it as being the identity function for other variables, e.g., $\sigma \texttt{[Z]} = \texttt{[Z]}$. This convention makes it easier to composite substitutions as function composition.

As in the example before, applying a substitution to sentences/rules makes them more specific. It introduces a partial order among sentences/rules. It is straightforward to verify that the relation $\leq$ defined below is a partial order. We leave the proof to \hyperref[appendix:order]{Appendix \ref{appendix:order}}.

\begin{definition}[Partial order among sentences and rules]
\label{dfn:order}
Let $s_1$ and $s_2$ be two sentences, $s_1$ is an instance of $s_2$ (denoted by $s_1 \leq s_2$) if and only if there exists a substitution $\sigma$ such that $s_1 = \sigma s_2$.  In this case, we also say $s_2$ is more general than $s_1$.
Similarly, given two rules $r_1$ and $r_2$, $r_1$ is an instance of $r_2$ (or $r_2$ is more general than $r_1$, denoted by $r_1 \leq r_2$) if and only if $\exists \sigma, r_1 = \sigma r_2$.
\end{definition}

A subtlety in the definition is judging whether two rules are equal. For a MetaQNL rule, premises are unordered, and variable renaming does not matter. In more jargonized words, rule equality is defined modulo premise reordering and $\alpha$-conversion.

In reasoning (Fig.~\ref{fig:formal_system}), the prover is given a set of \hlorange{rules} $\mathcal{M}$, multiple concrete sentences $A$ as \hlgreen{assumptions}, and one sentence $g$ as the \hlblue{goal}. It iteratively instantiates concrete rules from $\mathcal{M}$ and applies them to generate a proof of $g$. Similar to Prolog, $g$ may have variables (\texttt{The elephant [X]}), and the prover succeeds if it proves any instance of $g$ (E.g., \texttt{The elephant likes vegetables}).

\begin{definition}[Proof]
\label{dfn:proof}
A proof $P = (V, E)$ is a directed acyclic graph whose vertices $V$ are concrete sentences or concrete rules. 
For each concrete rule $r = p_1; p_2; \dots p_n \vdash c \in V$, it must satisfy two conditions: 
(1) $r$ connects to its conclusion $c \in V$ via an edge $(r, c) \in E$;
(2) For each premise $p_i$, we have $p_i \in V$ and $(p_i, r) \in E$.
Besides these edges, there cannot be any other edge in $E$.
Also, there can be multiple sentences without inbound edges (the proof's assumptions), but there is only one sentence without outbound edges (the proof's goal).
\end{definition}

\begin{definition}[Theorem proving]
\label{dfn:theorem_proving}
Given a set of rules $\mathcal{M} = \{r_1, r_2, \dots, r_k\}$, concrete sentences $A = \{a_1, a_2, \dots, a_n\}$ as assumptions, and a sentence $g$ as the goal, the theorem prover tries to find a proof $P$ such that: 
(1) $P$'s assumptions are $A$.
(2) $P$'s goal is an instance of $g$.
(3) Every rule $r$ in $P$ is an instance of a rule in $\mathcal{M}$.
\end{definition}

\section{MetaInduce: Learning MetaQNL Rules from Data}

\smallsec{Problem Setup and Loss Function}
Rule induction is a machine learning problem where the model consists of rules rather than continuous weights. The problem setup is familiar: Given a training set $\mathcal{D}_{\text{train}}$ and a test set $\mathcal{D}_{\text{test}}$, the goal is to use $\mathcal{D}_{\text{train}}$ to find a model that performs well not only on $\mathcal{D}_{\text{train}}$ itself but also on $\mathcal{D}_{\text{test}}$. For MetaQNL specifically, the training set $\mathcal{D}_{\text{train}} = \{\mathcal{D}_{\text{train}}^{+}, \mathcal{D}_{\text{train}}^{-}\}$ consists of a set of \emph{provable} examples $\mathcal{D}_{\text{train}}^{+}$ and a set of \emph{unprovable} examples $\mathcal{D}_{\text{train}}^{-}$. They both contain training examples in the form of $(A_i, g_i)$, where $A_i$ is a set of assumptions and $g_i$ is the goal. A model $\mathcal{M}$ is consistent with a provable example $(A_i, g_i) \in \mathcal{D}_{\text{train}}^{+}$ if $g_i$ is provable from $A_i$ using rules in $\mathcal{M}$. Similarly, $\mathcal{M}$ is consistent with an unprovable example $(A_i, g_i) \in \mathcal{D}_{\text{train}}^{-}$ if $g_i$ cannot be proved from $A_i$. In other words, provable examples are positive examples demonstrating sound logical inference, whereas unprovable examples are negative examples demonstrating unsound inference.

Given only $\mathcal{D}_{\text{train}}$, we need to find a model consistent with as many examples in $\mathcal{D}_{\text{test}}$ as possible. However, it is not sufficient to optimize the consistency with training data, because there is a trivial model that performs perfectly in training but fails in testing---one rule per example. That is, given a example $(A_i, g_i) \in \mathcal{D}_{\text{train}}^{+}$, if $A_i = \{a_1, a_2, \dots, a_k\}$, it is provable using the rule $a_1; a_2; \dots; a_k \vdash g_i$.

Thus we need to penalize the model complexity. 
While other choices are possible, here we measure model complexity as the number of rules. We minimize a loss function that evaluates both model complexity and consistency with training data:
\begin{equation}
\label{eqn:loss}
    \mathcal{L}(\mathcal{M}) = |\mathcal{M}|  - \lambda^{+} \mathcal{N}(\mathcal{M}, \mathcal{D}_{\text{train}}^{+}) - \lambda^{-} \mathcal{N}(\mathcal{M}, \mathcal{D}_{\text{train}}^{-}),
\end{equation}
where $|\mathcal{M}|$ is the number of rules; $\mathcal{N}(\mathcal{M}, \mathcal{D}_{\text{train}}^{+})$ and $\mathcal{N}(\mathcal{M}, \mathcal{D}_{\text{train}}^{-})$ are the number of provable/unprovable examples consistent with $\mathcal{M}$ respectively. $\lambda^{+}$ and $\lambda^{-}$ are hyperparameters controlling the trade-off between the three terms. 

The optimization problem is challenging. Given $\mathcal{M}$, even a single evaluation of $\mathcal{L}(\mathcal{M})$ is expensive: $\mathcal{N}(\mathcal{M}, \mathcal{D}_{\text{train}}^{+})$ and $\mathcal{N}(\mathcal{M}, \mathcal{D}_{\text{train}}^{-})$ require running the prover on all training examples. Furthermore, it is much harder to find the optimal $\mathcal{M}$ due to the combinatorial and non-differentiable search space. We introduce MetaInduce, a general method for learning rules by encoding Eqn.~\ref{eqn:loss} as a maximum satisfiability (MAX-SAT) problem, which can be solved efficiently by off-the-shelf solvers.

\begin{algorithm}[h]
\DontPrintSemicolon
\SetKwInOut{Input}{Input}
\SetKwInOut{Output}{Output}
\SetKwFunction{ProposeRules}{propose\_rules}
\SetKwFunction{Prove}{prove}
\SetKwFunction{EncodeProof}{encode\_proofs}
\SetKwFunction{SolveMAXSAT}{solve\_max\_sat}
\SetKwFunction{AbstractRules}{abstract\_rules}
\SetKwFunction{PruneRules}{prune\_rules}
\Input{Training data $\mathcal{D}_{\text{train}} = \{(A_i,g_i)\}_{i=1}^{n}$; $A_i$ is a set of assumptions; $g_i$ is the goal.}
\Output{A model $\mathcal{M}$ consisting of a set of rules}

$\mathcal{M} \leftarrow \varnothing$ \;
\For{$j \leftarrow 1$ \KwTo num\_epochs}{
  \For{$i \leftarrow 1$ \KwTo $n$} {
    $\text{candidates} \leftarrow \ProposeRules(\mathcal{D}_{\text{train}}, i)$ \;
    \Prove{$A_i$, $g_i$, $\text{candidates} \cup \mathcal{M}$} \;
  }
  $\text{rules} \leftarrow \AbstractRules{}$ \;
  $\mathcal{M} \leftarrow \PruneRules{\textnormal{rules}}$ \;
}
 \caption{MetaInduce: A general method for learning MetaQNL rules from data.}
 \label{alg:metainduce}
\end{algorithm}

\smallsec{Overview of MetaInduce}
MetaInduce is outlined in Algorithm~\ref{alg:metainduce}. Similar to SGD for training neural networks, MetaInduce goes through the training data for several epochs; during an epoch, it processes one example per iteration. Given an example $(A_i, g_i)$ (either provable or unprovable), it first relies on a \emph{rule proposer} for generating candidate rules that are concrete and potentially useful for proving $g_i$ from $A_i$. Then it runs an existing prover to search for proofs, using both the candidate rules and existing rules in the model. At the end of each epoch, MetaInduce abstracts all concrete rules used in the proofs into rules with variables. Then it performs rule pruning---selecting $\mathcal{M}$ as a subset of the rules minimizing the loss (Eqn.~\ref{eqn:loss}). Next, we explain each step in more detail.

\smallsec{Rule Proposal} 
The rule proposer is dataset-dependent and allows incorporating prior knowledge about a particular task. However, a good rule proposer alone---if not embedded in MetaInduce---is not sufficient for learning rules.  
First, the rule proposer only generates concrete rules. It is up to MetaInduce to abstract them into rules with variables. Second, the rule proposer generates rules useful for a single training example, whereas MetaInduce learns rules useful for the entire dataset. Third, the rule proposer does not have to be accurate. MetaInduce can reliably learn correct rules even if most candidate rules are wrong (see Sec.~\ref{sec:experiments}).

\smallsec{Theorem Proving}
Theorem proving in MetaQNL is relatively straightforward, thanks to existing algorithms such as forward/backward chaining. Forward chaining starts with the assumptions and applies rules to derive new sentences until the goal is reached. Conversely, backward chaining starts with the goal and applies rules in the reverse direction until all assumptions are satisfied. We implement forward chaining using the Rete algorithm for fast rule matching~\citep{doorenbos1995production} and the basic backward chaining algorithm from a standard textbook~\citep{russell2002artificial}. The prover returns proofs containing all different paths to the goal up to a predefined depth limit. 

\smallsec{Rule Abstraction}
The proofs contain only concrete rules (Definition~\ref{dfn:proof}), and we have to generalize them into rules with variables. We use a symbolic procedure called \emph{anti-unification}~\citep{plotkin1970note} to find general rules given concrete one. Given two rules $r_1$ and $r_2$, anti-unification attempts to find the most specific rule $r$ such that $r_1 \leq r$ and $r_2 \leq r$ (analogous to the lowest common ancestor of two nodes in a tree; see Fig.~\ref{fig:anti} for examples). It does so by recursively matching the beginning of two sentences. Please see \hyperref[appendix:unification]{Appendix \ref{appendix:unification}} for details.

Let $\Gamma$ be the set of all concrete rules in the proofs. To augment $\Gamma$ with general rules, we iteratively anti-unify rules in $\Gamma$ and add the result back, until no new rule can be generated. We denote the result by $\Gamma'$, which contains not only concrete rules but also their generalizations. 

\smallsec{Rule Pruning with MAX-SAT}
Rule pruning selects $\mathcal{M}$ as a subset of $\Gamma'$ by encoding all proofs as a MAX-SAT problem, whose solution corresponds to a set of rules that approximately minimize the loss function Eqn.~\ref{eqn:loss}. We encode each rule $r \in \Gamma'$ using a boolean variable (also denoted $r$). $r = 1$ means the rule should be included in $\mathcal{M}$. For any concrete rule $\text{cr} \in \Gamma'$, we have an additional boolean variable $\text{cr}$. $\text{cr} = 1$ means $\text{cr}$ is necessary for proving the training examples. We impose 3 different types of constraints on these boolean variables:

\begin{itemize}[leftmargin=*]
\item \emph{Data consistency}: For the $i$th training example, its proof $P_i$ may have many paths from the assumptions to the goal, but the example is provable as long as any one of them is valid. For provable examples (those in $\mathcal{D}_{\text{train}}^{+}$), we encode $P_i$ as a disjunction of proof paths. Each path is valid if and only if all concrete rules along the path are valid. So we encode a proof path as a conjunction of all $\text{cr}$ boolean variables it contains (see Fig.~\ref{fig:metainduce}).  Analogously, for unprovable examples (those in $\mathcal{D}_{\text{train}}^{-}$), we simply take the negation of the previous boolean formula to encourage the absence of a valid proof. Finally, a good model is not necessarily consistent with every training example. So $P_i$ is encoded as a soft constraint with weight $\lambda^{+}$ or $\lambda^{-}$.

\item \emph{Model complexity}: To minimize the number of rules, we add a soft constraint $\lnot r$ of weight 1 for each $r$ boolean variables. It encourages $r = 0$.

\item \emph{Rules instantiation}: Each concrete rule $\text{cr}$ must be an instance of a rule $r$. Let $r_1, r_2, \dots, r_k \in \Gamma'$ be the set of all rules in $\Gamma'$ such that $cr \leq r_i$. $\text{cr}$ can be instantiated if and only if at least one of them is in the model. Therefore, we add a hard constraint $\text{cr} = r_1 \lor r_2 \lor \dots \lor r_k$. 
\end{itemize}

Given a set of boolean constraints, each with a weight, a MAX-SAT solver finds an assignment of boolean variables to minimize the combined weights of violated constraints, which equals to Eqn.~\ref{eqn:loss} for the specific constraints above. Therefore, running an off-the-shelf MAX-SAT solver on these constraints gives us a set of rules that minimizes our loss function.

\begin{figure}[t]
\centering
\begin{subfigure}{.48\textwidth}
    \centering
    \includegraphics[width=0.95\textwidth]{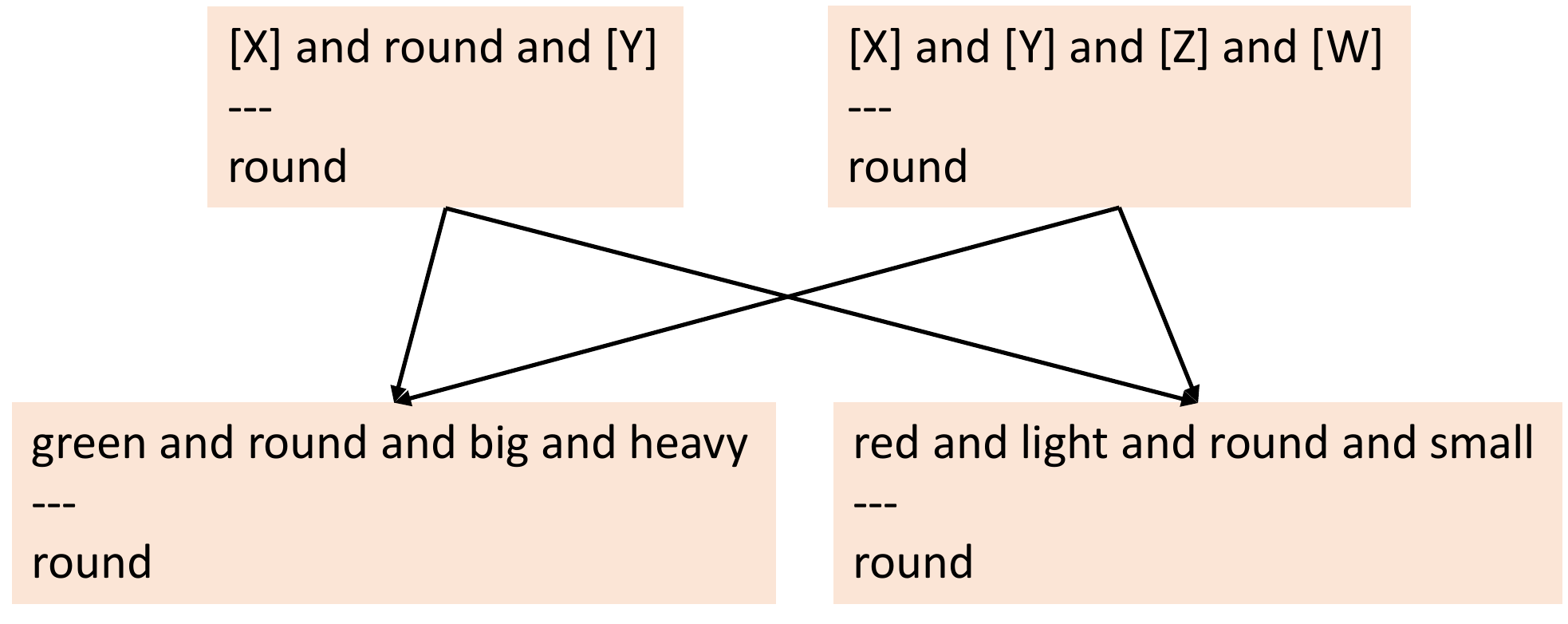}
    \caption{Examples of anti-unifying two concrete rules into more abstract rules. Anti-unification may have multiple solutions that are not comparable with each other.}
    \label{fig:anti}
\end{subfigure}
\begin{subfigure}{.48\textwidth}
    \centering
    \includegraphics[width=0.95\textwidth]{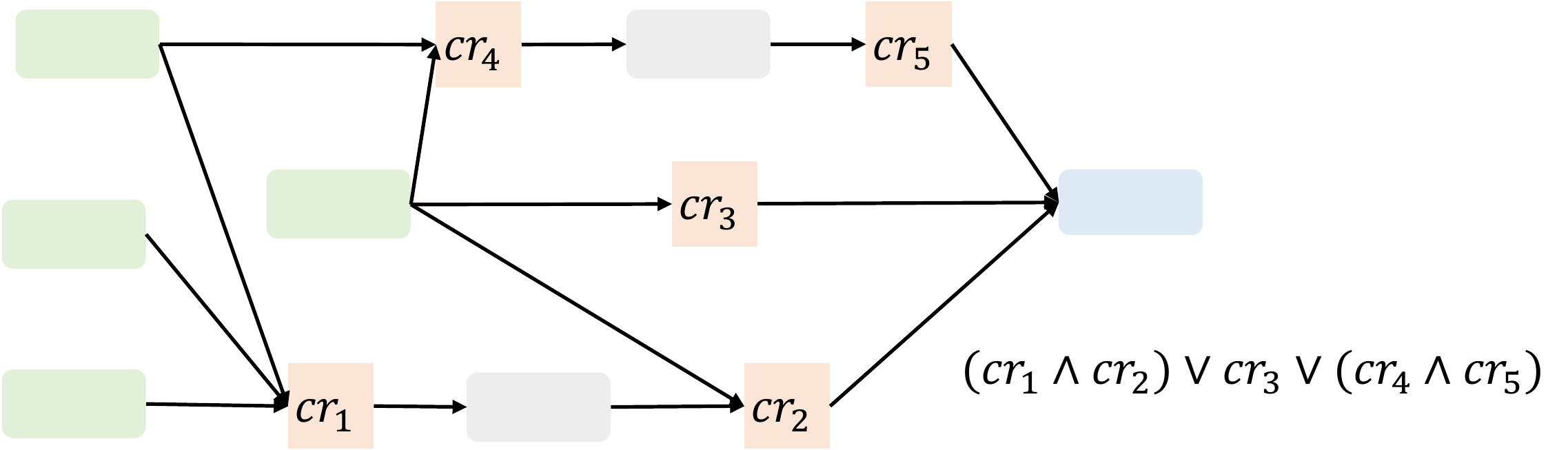}
    \caption{Encoding as a boolean constraint a proof with 3 paths from \hlgreen{assumptions} to the \hlblue{conclusion}. Each concrete rule $\text{cr}_{i}$ corresponds to a boolean variable. The proof is a disjunction of all paths; each path is a conjunction of concrete rules.}
    \label{fig:metainduce}
\end{subfigure}
\end{figure}

\section{Soft Matching}
\label{sec:soft_matching}

Similar to classical theorem proving, reasoning in MetaQNL relies on precise and rigid matching between rules and assumptions. For example, the rule ``\texttt{The [A] likes cats} $\vdash$ \texttt{Someone likes cats}'' is not applicable to the assumption ``\texttt{An elephant likes cats}'' due to the lack of ``\texttt{The}''. Although precise matching guarantees the rigor of reasoning,  reasoning performed in natural language is often fuzzy and ambiguous, without the same degree of rigor as a mathematical proof. Supporting fuzzy reasoning is necessary for MetaQNL to cover a broader spectrum of reasoning in natural language. It requires us to relax the rigorous proofs in Definition~\ref{dfn:theorem_proving} to fuzzy proofs with scores indicating the degree of rigor. Another benefit of soft matching is that it allows the system to degrade gracefully---it can produce an ``educated guess'' if  the existing knowledge base of rules is  insufficient for producing a rigorous answer.

We extend MetaQNL with soft matching---relaxing the rigid matching conditions when applying rules. Think of applying a rule $r$ to a set of assumptions $A$ as instantiating concrete rules on the fly: with rigid matching, we instantiate concrete rules $cr$ such that $cr$'s premises are $A$ and $cr$ must be an exact instance of $r$. In contrast, soft matching produces both concrete rules $cr$ and scores. $cr$'s premises are still $A$ but $cr$ is not required to be an exact instance of $r$. Further, the matching scores can be aggregated to calculate proof scores, allowing us to produce fuzzy proofs when rigorous proofs are impossible.

\begin{definition}[Soft matching]
\label{dfn:soft_matching}
Given a rule $r$ and concrete sentences $A = \{a_1, a_2, \dots, a_n\}$ as assumptions, soft matching outputs concrete rules with scores: $(cr_1, s_1), (cr_2, s_2), \dots, (cr_k, s_k)$ such that 
(1) $\forall i, s_i \in [0, 1]$.
(2) $\forall i, cr_i$'s premises is $A$.
\end{definition}

There are many possible ways to realize soft matching, including using  neural networks. For example, we could use a pretrained neural language model to output concrete rules and matching scores~\citep{devlin2019bert,raffel2020exploring}. In this paper, as an initial step, we experiment with a simple soft matching procedure based on anti-unification. 

Specifically, we perform soft matching only in testing. During training, MetaInduce still uses rigid matching for learning rules. Once the rules are learned, we can use them for making predictions with soft matching. Given a learned rule $r$ and assumptions $A$ from a testing example, $r$ is not necessarily applicable, but we try to find a more general rule $r'$ that is applicable by anti-unifying $A$ with the premises of $r$. Following the previous example, anti-unifying the assumption ``\texttt{An elephants likes cats}'' with the premise ``\texttt{The [A] likes cats}'' produces ``\texttt{[A] likes cats}'', which is applicable. Note that this procedure accommodates rigid matching as a special case. If $r$ itself is applicable, the anti-unification would produce $r' = r$. For calculating the matching score, we use heuristics based on the number of perfectly matched words between $r$ and $A$.

\section{Experiments}
\label{sec:experiments}

We instantiate MetaQNL/MetaInduce on three tasks: learning compositional instructions on MiniSCAN~\citep{lake2019human}/SCAN~\citep{lake2018generalization}, logical reasoning on RuleTaker~\citep{clark2020transformers}, and morphological analysis on SIGMORPHON 2018~\citep{cotterell2018conll}. Not only does MetaInduce learn rules achieving state-of-the-art prediction accuracy on the three synthetic datasets (MiniSCAN, SCAN, and RuleTaker), but it uses only a minor fraction of training data. Further, the rules recovered by MetaInduce match precisely with the ground truth rules of MiniSCAN and SCAN. We evaluate soft matching only on SIGMORPHON 2018 because it is a real-world dataset with noise and ambiguity. Experiments on it show that our method can tolerate noise (even without soft matching), and they also suggest directions for future improvements.

\smallsec{Learning Compositional Instructions on MiniSCAN and SCAN}
These two datasets are standard benchmarks for compositional generalization. They have similar a format of translating a source sequence to a target sequence, e.g., ``\texttt{jump} $\rightarrow$ \texttt{JUMP}'', ``\texttt{jump twice} $\rightarrow$ \texttt{JUMP JUMP}''. MiniSCAN consists of only 14 training examples, whereas SCAN has 17K. State of the art has reached 100\% accuracy on both datasets~\citep{liu2020compositional,nye2020learning,chen2020compositional}.

In training, we treat each source/target pair $x \rightarrow y$ as a provable example $(A_i, g_i) \in \mathcal{D}_{\text{train}}^{+}$, with empty assumptions $A_i = \varnothing$ and the goal $g_i$ = ``$x$ \texttt{\$MAPS\_TO\$} $y$'', e.g., ``\texttt{jump twice \$MAPS\_TO\$ JUMP JUMP}''. 
In testing, we use ``$x$ \texttt{\$MAPS\_TO\$ [Y]}'' as the goal, where \texttt{[Y]} is a placeholder to be filled by the prover. The prover succeeds if it proves a goal with any \texttt{[Y]}. We do not include any unprovable examples, i.e., $\mathcal{D}_{\text{train}}^{-} = \varnothing$.

We use a rule proposer independent of specific training examples. First, it generates all concrete rules with $\leq 2$ premises by combining the sentences in the training set in all possible ways. Then it filters the rules using prior knowledge about compositional generalization: The meaning of a long sequence depends on its subsequences. For example. ``\texttt{jump \$MAPS\_TO\$ JUMP} $\vdash$ \texttt{jump twice \$MAPS\_TO\$ JUMP JUMP}'' is a valid rule, since \texttt{jump} is a subsequence of \texttt{jump twice}. But  ``\texttt{look \$MAPS\_TO\$ LOOK} $\vdash$ \texttt{jump twice \$MAPS\_TO\$ JUMP JUMP}'' is not a valid rule. Note that similar assumptions were also made in prior works~\citep{nye2020learning,liu2020compositional}

We use backward chaining as the prover and Z3~\citep{de2008z3} as the MAX-SAT solver. For SCAN, we train only on the 400 shortest examples and test on four different splits: \texttt{simple}, \texttt{length}, \texttt{addprim\_jump}, and \texttt{addprim\_turn\_left}. On both datasets, MetaInduce achieves 100\% testing accuracy and successfully recovers the ground truth rules. Here is an example rule learned from SCAN: ``\texttt{[A] \$MAPS\_TO\$ [B]}; \texttt{[C] \$MAPS\_TO\$ [D]} $\vdash$ \texttt{[A] after [C] \$MAPS\_TO\$ [D] [B]}''. More examples are included in \hyperref[appendix:miniscan]{Appendix \ref{appendix:miniscan}} and  \hyperref[appendix:scan]{Appendix \ref{appendix:scan}}. We tune $\lambda^{+}$ on 1000 validation examples. The validation accuracy is fairly robust w.r.t. different $\lambda^{+}$ (Table~\ref{table:lambda}).

\begin{table*}[ht]
  \small
  \centering
  \caption{Validation accuracies on the \texttt{length} split of SCAN with different $\lambda^{+}$. $\infty$ means encoding data consistency as hard constraints.}
  \begin{tabular}{@{}llllllll@{}}
    \toprule
    $\lambda^{+}$ & 0.32 & 0.64 & 1.28 & 2.56 & 5.12 & 10.24 & $\infty$ \\
    \midrule
    \#Rules learned & 16 & 17 & 20 & 20 & 20 & 20 & 20 \\
    Accuracy & 85.9 & 90.3 & 100.0 & 100.0 & 100.0 & 100.0 & 100.0 \\
    \bottomrule
  \end{tabular}
  \label{table:lambda}
\end{table*}

\smallsec{Logical Reasoning on RuleTaker}
The RuleTaker dataset tests logical reasoning in synthetic English sentences. It consists of data examples similar to the one in Fig.~\ref{fig:formal_system}. 
The original RuleTaker is generated with the closed-world assumption (CWA)---it assumes a sentence is false if it is not provable. \citet{tafjord2020proofwriter} introduces a version of RuleTaker with the open-world assumption (OWA). Under OWA, a sentence can be proved, disproved, or neither. We benchmark on the OWA version.

Some examples in RuleTaker are meant to be disproved: If ``\texttt{The elephant is tall}'' is true, then ``\texttt{The elephant is not tall}'' should be false. We add special symbols \texttt{\$TRUE\$} or \texttt{\$FALSE\$} before sentences, so that the previous example can be disproved using the rule ``\texttt{\$TRUE\$ The elephant is tall} $\vdash$ \texttt{\$FALSE\$ The elephant is not tall}''. For each example to be proved, we add it to the set of provable examples $\mathcal{D}_{\text{train}}^{+}$ and its negation to unprovable examples $\mathcal{D}_{\text{train}}^{-}$. Conversely, for each example to be disproved, we add it to $\mathcal{D}_{\text{train}}^{-}$ and its negation to $\mathcal{D}_{\text{train}}^{+}$. For examples that can be neither proved nor disproved, we add itself and its negation to $\mathcal{D}_{\text{train}}^{-}$.

\begin{table*}[ht]
  \small
  \centering
  \caption{Answer predicting accuracies on the OWA version of RuleTaker. The model is trained on D5 or D3, and tested on D5 (proof depth $\leq 5$). Columns correspond to different proof depths within the test data. N/A means there is no proof since the test example can be neither proved nor disproved.}
  \begin{tabular}{@{}llllllllll@{}}
    \toprule
    Train & Model & N/A & 0 & 1 & 2 & 3 & 4 & 5 & All \\
    \midrule
    \multirow{2}{*}{D3} & ProofWriter & \textbf{99.9} & \underline{100.0} & 99.3 & \underline{99.7} & \textbf{99.2} & \textbf{99.1} & \textbf{98.8} & \textbf{99.6} \\
    & Ours & 99.4 & \underline{100.0} & \textbf{100.0} & \underline{99.7} & 98.9 & 98.9 & 98.6 & 99.4 \\
    \midrule
    \multirow{2}{*}{D5} & ProofWriter & \textbf{99.9} & \underline{100.0} & 99.3 & 99.7 & 99.2 & 99.1 & 98.8 & 99.6 \\
    & Ours & 99.6 & \underline{100.0} & \textbf{100.0} & \textbf{100.0} & \textbf{100.0} & \textbf{99.4} & \textbf{99.1} & \textbf{99.7} \\
    \bottomrule
  \end{tabular}
  \label{table:results}
\end{table*}

RuleTaker includes ground truth proofs providing concrete rules such as ``\texttt{\$TRUE\$ The elephant is tall} $\vdash$ \texttt{\$FALSE\$ The elephant is not tall}'' but not any abstraction that allows generalizing beyond the specific examples. Our rule proposer simply generates these ground truth concrete rules, whereas MetaInduce tries to learn general rules. And we use simple heuristics for filtering invalid rules generated by anti-unification. 

All experiments are on machines with 0 GPUs, 32GB RAM, and four Intel Xeon Silver 4114 CPUs. We run MetaInduce for 5 epochs on a random subset of 10000 training examples, which takes about 20 hours. We use forward chaining as the prover and a depth limit of 7. The hyperparameters $\lambda^{+}$ and $\lambda^{-}$ are tuned on validation data. Please see \hyperref[appendix:ruletaker]{Appendix \ref{appendix:ruletaker}} for example rules. 

We compare our method with ProofWriter~\citep{tafjord2020proofwriter}---a state-of-the-art method that also uses ground truth proofs. Following their setup, we test on D5 (a subset of RuleTaker with proof depths $\leq$ 5) and train separate models on D5 and D3 (proof depths $\leq$ 3). Training on D3 is for evaluating the model's generalization to longer proofs. Results are in Table~\ref{table:results}. MetaInduce achieves state-of-the-art accuracy and is competitive with ProofWriter. 
Further, it learns significantly more compact models with much less training data. For example, the model trained on D3 with $\lambda^{+} = \lambda^{-} = 1.28$ using only $14\%$ of the training data has only 79 rules and a total of 2869 symbols, but achieves a test accuracy of 99.4. In comparison, ProofWriter has an accuracy of 99.6 and is based on T5-11B~\citep{raffel2020exploring}, which has 11 billion parameters.

\smallsec{Morphological Analysis on SIGMORPHON 2018}
To evaluate our method on real-world natural language, we use the morphological analysis task and dataset in \citet{akyurek2020learning}: given the surface form of a word (e.g., \texttt{studied}), the model predicts its lemma (\texttt{study}) and an unknown number of morphological tags, such as \texttt{V} (verb), \texttt{SG} (singular), and \texttt{PST} (past tense). The data is constructed from the SIGMORPHON 2018 dataset. It consists of 3 languages with varying morphological complexity—Spanish, Swahili, and Turkish. For each language, they sample a training set of 1000 examples and three test sets of 100 examples each (FUT, PST, and OTHER). FUT consists exclusively of words in the future tense; PST consists of words in the past tense. The training set has only 8 past-tense words and 8 future-tense words. Therefore, FUT and PST test models' few-shot learning capabilities. 

To apply MetaQNL, we represent both the surface form and the lemma as characters. The surface form serves as the assumption, whereas the lemma and the tags serve as conclusions. For example, for the Spanish surface form \texttt{zarandeamos} with lemma \texttt{zarandear} and tags \texttt{V;IND;PRS;1;PL}, we treat \texttt{z a r a n d e a m o s} as the assumptions and construct 6 provable examples with goals \texttt{\$LEMMA\$ z a r a n d e a r}, \texttt{\$TAG\$ V}, \texttt{\$TAG\$ IND}, \texttt{\$TAG\$ PRS}, \texttt{\$TAG\$ 1}, and \texttt{\$TAG\$ PL}.  Examples with the same assumption but any other goals are treated as unprovable. The rule proposer simply generates rules that can prove the conclusion in a single step, such as \texttt{z a r a n d e a m o s} $\vdash$ \texttt{\$LEMMA\$ z a r a n d e a r} (more examples in \hyperref[appendix:sigmorphon]{Appendix \ref{appendix:sigmorphon}}).

Following \citet{akyurek2020learning}, we evaluate the predictions using F1 score and compare with a standard seq2seq neural network: LSTMs with attention. Note that we’re comparing with the baseline in \citet{akyurek2020learning}, not their proposed method. Their method is orthogonal to us since it focuses on generating synthetic data for augmenting the training set.

\begin{table*}[ht]
  \small
  \centering
  \caption{F1 scores of morphological analysis on three languages. Results on FUT and PST are averaged. Our method is competitive with a neural seq2seq model on Swahili. Soft matching bridges the gap on Spanish but does not help on the other two languages.}
  \begin{tabular}{@{}lllllll@{}}
    \toprule
    Model & \multicolumn{2}{c}{Spanish} & \multicolumn{2}{c}{Swahili} & \multicolumn{2}{c}{Turkish} \\
    \cmidrule(r){2-3} \cmidrule(r){4-5} \cmidrule(r){6-7} 
    & FUT+PST & OTHER & FUT+PST & OTHER & FUT+PST & OTHER \\
    \midrule
    LSTMs + attention & \underline{66} & \textbf{88} & 75 & \textbf{90} & \textbf{69} & \textbf{85} \\
    Ours & 55 & 82 & \textbf{81} & \underline{86} & \underline{53} & \underline{71} \\
    Ours + Soft matching & \underline{66} & \underline{84} & \underline{80} & 85 & \underline{53} & 70 \\
    \bottomrule
  \end{tabular}
  \label{table:sigmorphon}
\end{table*}

Results are shown in Table~\ref{table:sigmorphon}. Note that the task is not trivial: the neural baseline performs far from perfect, especially on FUT and PST (an F1 score of 66\% on Spanish). Unlike the baseline, our method learns interpretable morphological rules; e.g., the suffix \texttt{áramos} in Spanish indicates the paste tense (more examples in \hyperref[appendix:sigmorphon]{Appendix \ref{appendix:sigmorphon}}). In terms of performance, our method (without soft matching) is competitive with the baseline on Swahili, but there are still gaps on Spanish and Turkish. Further analysis reveals different reasons behind the gaps: Turkish morphology is very complex. But our current way of instantiating MetaQNL only considers proofs of depth 1, which could be a restriction for learning more expressive rules. Spanish morphology is relatively simple. Our F1 score still has large room for improvement, because it learns over-specific rules that achieve high precision but low recall.

Next, we explore the use of soft matching. We keep training the same and apply soft matching only in testing. Given a testing example such as \texttt{z a r a n d e a m o s}, we consider not only applicable rules learned by MetaInduce but also additional rules generated through our simple soft matching mechanism based on anti-unification (Sec.~\ref{sec:soft_matching}). All rules are ranked based on their matching scores, which are calculated using heuristics. Rigid matching always has the highest score, and more perfectly matched characters lead to higher scores. After ranking the rules, we apply them one by one until we get a predicted lemma.

The bottom row in Table~\ref{table:sigmorphon} shows the result of soft matching. Even this simple form of soft matching can bridge the performance gap on Spanish. However, it leads to no improvements on Swahili and Turkish. We found that the individual rules learned on Swahili and Turkish are more approximate, i.e.\@ more like ``rules of thumb''---they capture the general pattern but have many exceptions. This is due to the increased morphological complexity. In these two languages, there are fewer simple rules such as ``This suffix always indicates the past tense.'' As a result, relaxing the matching conditions naively would lead to too many spurious rules.

\section{Limitations and Open Questions}

\label{sec:limitations}

First of all, our approach is far from mature. Substantial further development is needed for handling free-form natural language. Soft matching is one possible way to address linguistic variations. We could use a pretrained language model to output matching scores between rules and assumptions.

Our experiments are not large-scale but serve as proof of concept for a very novel approach at an early stage. MetaInduce does not yet scale to millions of training examples, which may be necessary to learn enough rules to handle the complexity of natural language. The current bottleneck is rule abstraction, which can be possibly addressed through better methods than anti-unification.

MetaInduce is a meta algorithm that permits many variations of its components. This provides many open questions and opportunities for integration with deep learning. For example, the rule proposer or theorem prover can be a deep network instead of a manually crafted heuristic. 

\subsubsection*{Acknowledgments}
This work is partially supported by the Office of Naval Research under Grant N00014-20-1-2634. The authors also gratefully acknowledge financial support from the Schmidt DataX Fund at Princeton University made possible through a major gift from the Schmidt Futures Foundation.

\bibliography{iclr2022_conference}

\begin{thebibliography}{60}
\providecommand{\natexlab}[1]{#1}
\providecommand{\url}[1]{\texttt{#1}}
\expandafter\ifx\csname urlstyle\endcsname\relax
  \providecommand{\doi}[1]{doi: #1}\else
  \providecommand{\doi}{doi: \begingroup \urlstyle{rm}\Url}\fi

\bibitem[Aky{\"u}rek et~al.(2021)Aky{\"u}rek, Aky{\"u}rek, and
  Andreas]{akyurek2020learning}
Ekin Aky{\"u}rek, Afra~Feyza Aky{\"u}rek, and Jacob Andreas.
\newblock Learning to recombine and resample data for compositional
  generalization.
\newblock In \emph{International Conference on Learning Representations
  (ICLR)}, 2021.

\bibitem[Alemi et~al.(2016)Alemi, Chollet, Een, Irving, Szegedy, and
  Urban]{alemi2016deepmath}
Alex~A Alemi, Fran{\c{c}}ois Chollet, Niklas Een, Geoffrey Irving, Christian
  Szegedy, and Josef Urban.
\newblock Deepmath - deep sequence models for premise selection.
\newblock In \emph{Advances in Neural Information Processing Systems
  (NeurIPS)}, 2016.

\bibitem[Bansal et~al.(2019)Bansal, Szegedy, Rabe, Loos, and
  Toman]{bansal2019learning}
Kshitij Bansal, Christian Szegedy, Markus~N Rabe, Sarah~M Loos, and Viktor
  Toman.
\newblock Learning to reason in large theories without imitation.
\newblock \emph{arXiv preprint arXiv:1905.10501}, 2019.

\bibitem[Bostrom et~al.(2021)Bostrom, Zhao, Chaudhuri, and
  Durrett]{bostrom2021flexible}
Kaj Bostrom, Xinyu Zhao, Swarat Chaudhuri, and Greg Durrett.
\newblock Flexible generation of natural language deductions.
\newblock In \emph{Conference on Empirical Methods in Natural Language
  Processing (EMNLP)}, 2021.

\bibitem[Chen et~al.(2020)Chen, Liang, Yu, Song, and
  Zhou]{chen2020compositional}
Xinyun Chen, Chen Liang, Adams~Wei Yu, Dawn Song, and Denny Zhou.
\newblock Compositional generalization via neural-symbolic stack machines.
\newblock In \emph{Advances in Neural Information Processing Systems
  (NeurIPS)}, 2020.

\bibitem[Cingillioglu \& Russo(2020)Cingillioglu and
  Russo]{cingillioglu2020learning}
Nuri Cingillioglu and Alessandra Russo.
\newblock Learning invariants through soft unification.
\newblock In \emph{Advances in Neural Information Processing Systems
  (NeurIPS)}, 2020.

\bibitem[Clark et~al.(2020)Clark, Tafjord, and
  Richardson]{clark2020transformers}
Peter Clark, Oyvind Tafjord, and Kyle Richardson.
\newblock Transformers as soft reasoners over language.
\newblock In \emph{International Joint Conference on Artificial Intelligence
  (IJCAI)}, 2020.

\bibitem[Cotterell et~al.(2018)Cotterell, Kirov, Sylak-Glassman, Walther,
  Vylomova, McCarthy, Kann, Mielke, Nicolai, Silfverberg,
  et~al.]{cotterell2018conll}
Ryan Cotterell, Christo Kirov, John Sylak-Glassman, G{\'e}raldine Walther,
  Ekaterina Vylomova, Arya~D McCarthy, Katharina Kann, Sabrina~J Mielke,
  Garrett Nicolai, Miikka Silfverberg, et~al.
\newblock The conll--sigmorphon 2018 shared task: Universal morphological
  reinflection.
\newblock In \emph{Proceedings of the CoNLL--SIGMORPHON 2018 Shared Task:
  Universal Morphological Reinflection}, pp.\  1--27, 2018.

\bibitem[Cropper \& Duman{\v{c}}i{\'c}(2020)Cropper and
  Duman{\v{c}}i{\'c}]{cropper2020inductive}
Andrew Cropper and Sebastijan Duman{\v{c}}i{\'c}.
\newblock Inductive logic programming at 30: a new introduction.
\newblock \emph{arXiv preprint arXiv:2008.07912}, 2020.

\bibitem[Dai et~al.(2019)Dai, Xu, Yu, and Zhou]{dai2019bridging}
Wang-Zhou Dai, Qiuling Xu, Yang Yu, and Zhi-Hua Zhou.
\newblock Bridging machine learning and logical reasoning by abductive
  learning.
\newblock In \emph{Advances in Neural Information Processing Systems
  (NeurIPS)}, 2019.

\bibitem[Darvas et~al.(2005)Darvas, H{\"a}hnle, and Sands]{darvas2005theorem}
{\'A}d{\'a}m Darvas, Reiner H{\"a}hnle, and David Sands.
\newblock A theorem proving approach to analysis of secure information flow.
\newblock In \emph{International Conference on Security in Pervasive Computing
  (SPC)}, 2005.

\bibitem[De~Moura \& Bj{\o}rner(2008)De~Moura and Bj{\o}rner]{de2008z3}
Leonardo De~Moura and Nikolaj Bj{\o}rner.
\newblock Z3: An efficient smt solver.
\newblock In \emph{International Conference on Tools and Algorithms for the
  Construction and Analysis of Systems (TACAS)}, 2008.

\bibitem[Devlin et~al.(2019)Devlin, Chang, Lee, and Toutanova]{devlin2019bert}
Jacob Devlin, Ming-Wei Chang, Kenton Lee, and Kristina Toutanova.
\newblock Bert: Pre-training of deep bidirectional transformers for language
  understanding.
\newblock 2019.

\bibitem[Dong et~al.(2018)Dong, Mao, Lin, Wang, Li, and Zhou]{dong2018neural}
Honghua Dong, Jiayuan Mao, Tian Lin, Chong Wang, Lihong Li, and Denny Zhou.
\newblock Neural logic machines.
\newblock In \emph{International Conference on Learning Representations
  (ICLR)}, 2018.

\bibitem[Doorenbos(1995)]{doorenbos1995production}
Robert~B Doorenbos.
\newblock Production matching for large learning systems.
\newblock Technical report, Carnegie Mellon University, 1995.

\bibitem[Evans \& Grefenstette(2018)Evans and Grefenstette]{evans2018learning}
Richard Evans and Edward Grefenstette.
\newblock Learning explanatory rules from noisy data.
\newblock \emph{Journal of Artificial Intelligence Research}, 61:\penalty0
  1--64, 2018.

\bibitem[Gontier et~al.(2020)Gontier, Sinha, Reddy, and
  Pal]{gontier2020measuring}
Nicolas Gontier, Koustuv Sinha, Siva Reddy, and Chris Pal.
\newblock Measuring systematic generalization in neural proof generation with
  transformers.
\newblock In \emph{Advances in Neural Information Processing Systems
  (NeurIPS)}, 2020.

\bibitem[Grefenstette(2013)]{grefenstette2013towards}
Edward Grefenstette.
\newblock Towards a formal distributional semantics: Simulating logical calculi
  with tensors.
\newblock \emph{arXiv preprint arXiv:1304.5823}, 2013.

\bibitem[Icard~III \& Moss(2014)Icard~III and Moss]{icard2014recent}
Thomas~F Icard~III and Lawrence~S Moss.
\newblock Recent progress on monotonicity.
\newblock In \emph{Linguistic Issues in Language Technology (LiLT)}, 2014.

\bibitem[Kathryn \& Mazaitis(2018)Kathryn and Mazaitis]{kathryn2018tensorlog}
William W Cohen Fan~Yang Kathryn and Rivard Mazaitis.
\newblock Tensorlog: Deep learning meets probabilistic databases.
\newblock \emph{Journal of Artificial Intelligence Research}, 1:\penalty0
  1--15, 2018.

\bibitem[Kov{\'a}cs \& Voronkov(2013)Kov{\'a}cs and Voronkov]{kovacs2013first}
Laura Kov{\'a}cs and Andrei Voronkov.
\newblock First-order theorem proving and vampire.
\newblock In \emph{International Conference on Computer Aided Verification
  (CAV)}, 2013.

\bibitem[Kutsia(2002)]{kutsia2002unification}
Temur Kutsia.
\newblock Unification with sequence variables and flexible arity symbols and
  its extension with pattern-terms.
\newblock pp.\  290--304, 2002.

\bibitem[Kutsia et~al.(2014)Kutsia, Levy, and Villaret]{kutsia2014anti}
Temur Kutsia, Jordi Levy, and Mateu Villaret.
\newblock Anti-unification for unranked terms and hedges.
\newblock \emph{Journal of Automated Reasoning}, 52:\penalty0 155--190, 2014.

\bibitem[Lake \& Baroni(2018)Lake and Baroni]{lake2018generalization}
Brenden Lake and Marco Baroni.
\newblock Generalization without systematicity: On the compositional skills of
  sequence-to-sequence recurrent networks.
\newblock In \emph{International Conference on Machine Learning (ICML)}, 2018.

\bibitem[Lake et~al.(2019)Lake, Linzen, and Baroni]{lake2019human}
Brenden~M Lake, Tal Linzen, and Marco Baroni.
\newblock Human few-shot learning of compositional instructions.
\newblock In \emph{Annual Meeting of the Cognitive Science Society (CogSci)},
  2019.

\bibitem[Lee et~al.(2016)Lee, He, Yih, Gao, Deng, and
  Smolensky]{lee2015reasoning}
Moontae Lee, Xiaodong He, Wen-tau Yih, Jianfeng Gao, Li~Deng, and Paul
  Smolensky.
\newblock Reasoning in vector space: An exploratory study of question
  answering.
\newblock In \emph{International Conference on Learning Representations
  (ICLR)}, 2016.

\bibitem[Li et~al.(2020)Li, Huang, Hong, Chen, Wu, and Zhu]{li2020closed}
Qing Li, Siyuan Huang, Yining Hong, Yixin Chen, Ying~Nian Wu, and Song-Chun
  Zhu.
\newblock Closed loop neural-symbolic learning via integrating neural
  perception, grammar parsing, and symbolic reasoning.
\newblock In \emph{International Conference on Machine Learning (ICML)}, 2020.

\bibitem[Liu et~al.(2020)Liu, An, Lou, Chen, Lin, Gao, Zhou, Zheng, and
  Zhang]{liu2020compositional}
Qian Liu, Shengnan An, Jian-Guang Lou, Bei Chen, Zeqi Lin, Yan Gao, Bin Zhou,
  Nanning Zheng, and Dongmei Zhang.
\newblock Compositional generalization by learning analytical expressions.
\newblock In \emph{Advances in Neural Information Processing Systems
  (NeurIPS)}, 2020.

\bibitem[MacCartney \& Manning(2007)MacCartney and
  Manning]{maccartney2007natural}
Bill MacCartney and Christopher~D Manning.
\newblock Natural logic for textual inference.
\newblock In \emph{ACL-PASCAL Workshop on Textual Entailment and Paraphrasing},
  2007.

\bibitem[Mao et~al.(2018)Mao, Gan, Kohli, Tenenbaum, and Wu]{mao2018neuro}
Jiayuan Mao, Chuang Gan, Pushmeet Kohli, Joshua~B Tenenbaum, and Jiajun Wu.
\newblock The neuro-symbolic concept learner: Interpreting scenes, words, and
  sentences from natural supervision.
\newblock In \emph{International Conference on Learning Representations
  (ICLR)}, 2018.

\bibitem[Marcus \& Davis(2020)Marcus and Davis]{marcus2020insights}
Gary Marcus and Ernest Davis.
\newblock Insights for ai from the human mind.
\newblock \emph{Communications of the ACM}, 64:\penalty0 38--41, 2020.

\bibitem[McAllester \& Givan(1993)McAllester and
  Givan]{mcallester1993taxonomic}
David McAllester and Robert Givan.
\newblock Taxonomic syntax for first order inference.
\newblock \emph{Journal of the ACM (JACM)}, 40:\penalty0 246--283, 1993.

\bibitem[McCune(1997)]{mccune1997solution}
William McCune.
\newblock Solution of the robbins problem.
\newblock \emph{Journal of Automated Reasoning}, 19:\penalty0 263--276, 1997.

\bibitem[Megill \& Wheeler(2019)Megill and Wheeler]{metamath}
Norman~D. Megill and David~A. Wheeler.
\newblock Metamath: A computer language for mathematical proofs.
\newblock 2019.

\bibitem[Mercier \& Sperber(2017)Mercier and Sperber]{mercier2017enigma}
Hugo Mercier and Dan Sperber.
\newblock \emph{The enigma of reason}.
\newblock Harvard University Press, 2017.

\bibitem[Minervini et~al.(2020)Minervini, Bo{\v{s}}njak, Rockt{\"a}schel,
  Riedel, and Grefenstette]{minervini2020differentiable}
Pasquale Minervini, Matko Bo{\v{s}}njak, Tim Rockt{\"a}schel, Sebastian Riedel,
  and Edward Grefenstette.
\newblock Differentiable reasoning on large knowledge bases and natural
  language.
\newblock In \emph{AAAI Conference on Artificial Intelligence}, 2020.

\bibitem[Mineshima et~al.(2015)Mineshima, Mart{\'\i}nez-G{\'o}mez, Miyao, and
  Bekki]{mineshima2015higher}
Koji Mineshima, Pascual Mart{\'\i}nez-G{\'o}mez, Yusuke Miyao, and Daisuke
  Bekki.
\newblock Higher-order logical inference with compositional semantics.
\newblock In \emph{Conference on Empirical Methods in Natural Language
  Processing (EMNLP)}, 2015.

\bibitem[Muggleton(1991)]{muggleton1991inductive}
Stephen Muggleton.
\newblock Inductive logic programming.
\newblock \emph{New Generation Computing}, 8:\penalty0 295--318, 1991.

\bibitem[Nye et~al.(2020)Nye, Solar-Lezama, Tenenbaum, and
  Lake]{nye2020learning}
Maxwell~I Nye, Armando Solar-Lezama, Joshua~B Tenenbaum, and Brenden~M Lake.
\newblock Learning compositional rules via neural program synthesis.
\newblock In \emph{Advances in Neural Information Processing Systems
  (NeurIPS)}, 2020.

\bibitem[Plotkin(1970)]{plotkin1970note}
Gordon Plotkin.
\newblock A note on inductive generalization.
\newblock \emph{Machine Intelligence}, 5:\penalty0 153--163, 1970.

\bibitem[Plotkin(1972)]{plotkin1972automatic}
Gordon Plotkin.
\newblock \emph{Automatic methods of inductive inference}.
\newblock PhD thesis, The University of Edinburgh, 1972.

\bibitem[Polu \& Sutskever(2020)Polu and Sutskever]{polu2020generative}
Stanislas Polu and Ilya Sutskever.
\newblock Generative language modeling for automated theorem proving.
\newblock \emph{arXiv preprint arXiv:2009.03393}, 2020.

\bibitem[Qu et~al.(2021)Qu, Chen, Xhonneux, Bengio, and Tang]{qu2020rnnlogic}
Meng Qu, Junkun Chen, Louis-Pascal Xhonneux, Yoshua Bengio, and Jian Tang.
\newblock Rnnlogic: Learning logic rules for reasoning on knowledge graphs.
\newblock In \emph{International Conference on Learning Representations
  (ICLR)}, 2021.

\bibitem[Raffel et~al.(2020)Raffel, Shazeer, Roberts, Lee, Narang, Matena,
  Zhou, Li, and Liu]{raffel2020exploring}
Colin Raffel, Noam Shazeer, Adam Roberts, Katherine Lee, Sharan Narang, Michael
  Matena, Yanqi Zhou, Wei Li, and Peter~J Liu.
\newblock Exploring the limits of transfer learning with a unified text-to-text
  transformer.
\newblock \emph{Journal of Machine Learning Research (JMLR)}, 21:\penalty0
  1--67, 2020.

\bibitem[Raghothaman et~al.(2019)Raghothaman, Mendelson, Zhao, Naik, and
  Scholz]{raghothaman2019provenance}
Mukund Raghothaman, Jonathan Mendelson, David Zhao, Mayur Naik, and Bernhard
  Scholz.
\newblock Provenance-guided synthesis of datalog programs.
\newblock In \emph{Symposium on Principles of Programming Languages (POPL)},
  2019.

\bibitem[Robinson \& Voronkov(2001)Robinson and Voronkov]{robinson2001handbook}
Alan~JA Robinson and Andrei Voronkov.
\newblock \emph{Handbook of automated reasoning}, volume~1.
\newblock 2001.

\bibitem[Rockt{\"a}schel \& Riedel(2017)Rockt{\"a}schel and
  Riedel]{rocktaschel2017end}
Tim Rockt{\"a}schel and Sebastian Riedel.
\newblock End-to-end differentiable proving.
\newblock In \emph{Advances in Neural Information Processing Systems
  (NeurIPS)}, 2017.

\bibitem[Russell \& Norvig(2002)Russell and Norvig]{russell2002artificial}
Stuart Russell and Peter Norvig.
\newblock Artificial intelligence: a modern approach.
\newblock 2002.

\bibitem[Saha et~al.(2020)Saha, Ghosh, Srivastava, and Bansal]{saha2020prover}
Swarnadeep Saha, Sayan Ghosh, Shashank Srivastava, and Mohit Bansal.
\newblock Prover: Proof generation for interpretable reasoning over rules.
\newblock In \emph{Conference on Empirical Methods in Natural Language
  Processing (EMNLP)}, 2020.

\bibitem[Saparov \& Mitchell(2021)Saparov and Mitchell]{saparov2021generative}
Abulhair Saparov and Tom~M Mitchell.
\newblock A generative symbolic model for more general natural language
  understanding and reasoning.
\newblock \emph{arXiv preprint arXiv:2105.02486}, 2021.

\bibitem[Schlag et~al.(2019)Schlag, Smolensky, Fernandez, Jojic, Schmidhuber,
  and Gao]{schlag2019enhancing}
Imanol Schlag, Paul Smolensky, Roland Fernandez, Nebojsa Jojic, J{\"u}rgen
  Schmidhuber, and Jianfeng Gao.
\newblock Enhancing the transformer with explicit relational encoding for math
  problem solving.
\newblock \emph{arXiv preprint arXiv:1910.06611}, 2019.

\bibitem[Si et~al.(2019)Si, Raghothaman, Heo, and Naik]{si2019synthesizing}
Xujie Si, Mukund Raghothaman, Kihong Heo, and Mayur Naik.
\newblock Synthesizing datalog programs using numerical relaxation.
\newblock In \emph{International Joint Conference on Artificial Intelligence
  (IJCAI)}, 2019.

\bibitem[Smolensky(1990)]{smolensky1990tensor}
Paul Smolensky.
\newblock Tensor product variable binding and the representation of symbolic
  structures in connectionist systems.
\newblock \emph{Artificial Intelligence}, 46:\penalty0 159--216, 1990.

\bibitem[Tafjord et~al.(2020)Tafjord, Mishra, and
  Clark]{tafjord2020proofwriter}
Oyvind Tafjord, Bhavana~Dalvi Mishra, and Peter Clark.
\newblock Proofwriter: Generating implications, proofs, and abductive
  statements over natural language.
\newblock \emph{arXiv preprint arXiv:2012.13048}, 2020.

\bibitem[Talmor et~al.(2020)Talmor, Elazar, Goldberg, and
  Berant]{talmor2020olmpics}
Alon Talmor, Yanai Elazar, Yoav Goldberg, and Jonathan Berant.
\newblock olmpics-on what language model pre-training captures.
\newblock \emph{Transactions of the Association for Computational Linguistics
  (TACL)}, 8:\penalty0 743--758, 2020.

\bibitem[Vaswani et~al.(2017)Vaswani, Shazeer, Parmar, Uszkoreit, Jones, Gomez,
  Kaiser, and Polosukhin]{vaswani2017attention}
Ashish Vaswani, Noam Shazeer, Niki Parmar, Jakob Uszkoreit, Llion Jones,
  Aidan~N Gomez, Lukasz Kaiser, and Illia Polosukhin.
\newblock Attention is all you need.
\newblock In \emph{Advances in Neural Information Processing Systems
  (NeurIPS)}, 2017.

\bibitem[Weber et~al.(2019)Weber, Minervini, M{\"u}nchmeyer, Leser, and
  Rockt{\"a}schel]{weber2019nlprolog}
Leon Weber, Pasquale Minervini, Jannes M{\"u}nchmeyer, Ulf Leser, and Tim
  Rockt{\"a}schel.
\newblock Nlprolog: Reasoning with weak unification for question answering in
  natural language.
\newblock In \emph{Annual Meeting of the Association for Computational
  Linguistics (ACL)}, 2019.

\bibitem[Yang et~al.(2017)Yang, Yang, and Cohen]{yangdifferentiable2017}
Fan Yang, Zhilin Yang, and William~W Cohen.
\newblock Differentiable learning of logical rules for knowledge base
  reasoning.
\newblock In \emph{Advances in Neural Information Processing Systems
  (NeurIPS)}, 2017.

\bibitem[Yang \& Deng(2019)Yang and Deng]{yang2019coqgym}
Kaiyu Yang and Jia Deng.
\newblock Learning to prove theorems via interacting with proof assistants.
\newblock In \emph{International Conference on Machine Learning (ICML)}, 2019.

\bibitem[Yernaux \& Vanhoof(2019)Yernaux and Vanhoof]{yernaux2019anti}
Gonzague Yernaux and Wim Vanhoof.
\newblock Anti-unification in constraint logic programming.
\newblock \emph{Theory and Practice of Logic Programming}, 19:\penalty0
  773--789, 2019.

\end{thebibliography}
\bibliographystyle{iclr2022_conference}

\clearpage

\appendix

\section{Partial Order Among Sentences and Rules}
\label{appendix:order}

Here we prove that the $\leq$ in Definition~\ref{dfn:order} of the main paper is indeed a partial order relation.

\newtheorem{theorem}{Theorem} 
\newtheorem{lemma}{Lemma} 
\newtheorem{corollary}{Corollary}
\setcounter{definition}{6}

\begin{definition}[Sentence length]
The length of a sentence $s = (t_1, t_2, \dots, t_n) \in \Sigma^{+}$ is $\text{length}(s) = n$.
\end{definition}

\begin{lemma}[Substitutions are noncontractive]
\label{lemma:noncontractive}
Applying substitutions does not make a sentence shorter. In other words, for any sentence $s = (t_1, t_2, \dots, t_n) \in \Sigma^{+}$ and substitution $\sigma: \Sigma_{v} \rightarrow \Sigma^{+}_{-s}$, we have $\text{length}(\sigma s) \geq n$. Further, $\text{length}(\sigma s) = n$ if and only if $\sigma$ maps all tokens in $s$ to sentences of length 1, i.e., $\forall i, \text{length}(\sigma t_i) = 1$.
\end{lemma}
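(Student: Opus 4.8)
The plan is to prove the lemma directly from the definition of substitution applied to a sentence. Recall that for $s = (t_1, t_2, \dots, t_n)$ we have $\sigma s = (\sigma t_1, \sigma t_2, \dots, \sigma t_n)$, which is the \emph{concatenation} of the sentences $\sigma t_1, \dots, \sigma t_n$. Hence $\text{length}(\sigma s) = \sum_{i=1}^{n} \text{length}(\sigma t_i)$. So everything reduces to understanding $\text{length}(\sigma t_i)$ for a single token $t_i$.

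First I would establish the key pointwise claim: for any token $t \in \Sigma$, $\text{length}(\sigma t) \geq 1$. This splits into two cases according to the definition of $\sigma t$. If $t \notin \Sigma_v$, then $\sigma t = t$, a single-token sentence, so $\text{length}(\sigma t) = 1$. If $t \in \Sigma_v$, then $\sigma t = \sigma(t) \in \Sigma^{+}_{-s}$, and since $\Sigma^{+}_{-s} = (\Sigma_w \cup \Sigma_v)^{+}$ consists of \emph{non-empty} sequences, $\text{length}(\sigma t) \geq 1$. In either case $\text{length}(\sigma t) \geq 1$, with equality exactly when $\sigma t$ has length $1$ (trivially so for non-variable tokens, and a genuine condition on the substitution for variable tokens).

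Next I would sum over $i$: $\text{length}(\sigma s) = \sum_{i=1}^{n} \text{length}(\sigma t_i) \geq \sum_{i=1}^{n} 1 = n$, which is the first assertion. For the equality characterization, a sum of $n$ terms each at least $1$ equals $n$ if and only if every term equals $1$; that is, $\text{length}(\sigma s) = n \iff \forall i,\ \text{length}(\sigma t_i) = 1$, which is exactly the stated condition. (One may phrase the final clause as ``$\sigma$ maps all tokens in $s$ to sentences of length $1$''; note this is automatic for the non-variable tokens, so the content is that $\sigma$ maps every \emph{variable} occurring in $s$ to a single-token sentence.)

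There is no real obstacle here; the only things to be careful about are (i) making sure the reader accepts that $\sigma s$ is the concatenation (so that lengths add), which is stated in Definition~\ref{dfn:subst}, and (ii) invoking the non-emptiness built into the ``$+$'' superscript notation for $\Sigma^{+}_{-s}$, which is what rules out a variable being sent to the empty string and thereby guarantees the ``noncontractive'' property. Both are immediate from the definitions already in the excerpt, so the proof is a short two-case argument followed by a one-line summation.
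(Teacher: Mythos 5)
Your proof is correct and follows essentially the same route as the paper's: establish $\text{length}(\sigma t)\geq 1$ for each token from the non-emptiness of $\Sigma^{+}_{-s}$, then sum over the tokens and observe that equality forces every summand to be $1$. Your version is slightly more explicit about the two-case split on whether $t$ is a variable, but the argument is the same.
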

\begin{proof}
For any substitution $\sigma: \Sigma_{v} \rightarrow \Sigma^{+}_{-s}$ and variable $v \in \Sigma_{v}$, $\sigma (v) \in \Sigma^{+}_{-s}$ is a sentence. Therefore, for any token $t \in \Sigma$, $\text{length}(\sigma t) \geq 1$ (Definition~\ref{dfn:subst}). For any sentence $s = (t_1, t_2, \dots, t_n)$, we have $\text{length}(\sigma s) = \sum_{i=1}^{n}{\text{length}(\sigma t_i)} \geq n$. And the equality holds if and only if $\forall i, \text{length}(\sigma t_i) = 1$.
\end{proof}

\begin{theorem}[Partial order among sentences]
\label{thm:sentence_partial_order}
If sentence equality is defined modulo $\alpha$-conversion, then the $\leq$ in Definition~\ref{dfn:order} is a partial order among sentences. In other words, 
\begin{enumerate}
    \item $\forall s \in \Sigma^{+}, s \leq s$.
    \item $\forall s_1, s_2 \in \Sigma^{+}$, if $s_1 \leq s_2$ and $s_2 \leq s_1$, then $s_1 = s_2$ modulo $\alpha$-conversion.
    \item $\forall s_1, s_2, s_3 \in \Sigma^{+}$, if $s_1  \leq s_2$ and $s_2 \leq s_3$, then $s_1 \leq s_3$.
\end{enumerate}
\end{theorem}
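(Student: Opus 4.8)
The plan is to check the three partial-order axioms in turn; reflexivity and transitivity are immediate, and antisymmetry is where the content lies.

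For reflexivity, take $\sigma$ to be the identity substitution, $\sigma(v) = v$ for every $v \in \Sigma_v$ (legitimate, since $\Sigma_v \subseteq \Sigma^{+}_{-s}$); then $\sigma s = s$, so $s \leq s$. For transitivity, suppose $s_1 = \sigma s_2$ and $s_2 = \sigma' s_3$. I would first record that the composite $\sigma \circ \sigma'$, defined by $(\sigma\circ\sigma')(v) = \sigma(\sigma'(v))$, is again a substitution: $\sigma'(v)\in\Sigma^{+}_{-s}$, and applying $\sigma$ to a special-symbol-free sentence produces another special-symbol-free sentence, so $(\sigma\circ\sigma')(v)\in\Sigma^{+}_{-s}$. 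A short induction on sentence length, using that substitution application distributes over concatenation (Definition~\ref{dfn:subst}), gives $(\sigma\circ\sigma')s = \sigma(\sigma' s)$ for all sentences $s$. Hence $s_1 = \sigma(\sigma' s_3) = (\sigma\circ\sigma') s_3$, so $s_1 \leq s_3$.

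The real work is antisymmetry. Assume $s_1 = \sigma s_2$ and $s_2 = \tau s_1$. By Lemma~\ref{lemma:noncontractive}, $\mathrm{length}(s_1) = \mathrm{length}(\sigma s_2) \geq \mathrm{length}(s_2)$ and symmetrically $\mathrm{length}(s_2)\geq\mathrm{length}(s_1)$, so both lengths equal some $n$. The equality clause of Lemma~\ref{lemma:noncontractive} then forces $\sigma$ to send every token of $s_2$ to a length-$1$ sentence, and likewise $\tau$ on $s_1$. Writing $s_1 = (a_1,\dots,a_n)$ and $s_2 = (b_1,\dots,b_n)$, I can therefore read off $a_i = \sigma b_i$ and $b_i = \tau a_i$ as single tokens for each $i$. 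Now fix $i$: if $b_i\notin\Sigma_v$ then $a_i = \sigma b_i = b_i$; if $b_i\in\Sigma_v$ then $a_i=\sigma(b_i)\in\Sigma^{+}_{-s}$ has length $1$, hence $a_i\in\Sigma_w\cup\Sigma_v$, and if $a_i\notin\Sigma_v$ we would get $b_i=\tau a_i=a_i$, contradicting $b_i\in\Sigma_v$. So $b_i\in\Sigma_v \iff a_i\in\Sigma_v$, and at every non-variable position $a_i=b_i$.

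It remains to handle the variable positions. Let $V_1,V_2$ be the variables occurring in $s_1,s_2$. By the previous step $\rho := \sigma|_{V_2}$ maps into $V_1$, and it is onto $V_1$ because $a_i = \sigma b_i$ ranges over all of $V_1$; moreover, writing $v = b_i\in V_2$, we have $\tau(\rho(v)) = \tau a_i = b_i = v$, so $\tau|_{V_1}\circ\rho = \mathrm{id}_{V_2}$ and $\rho$ is also injective, hence a bijection $V_2\to V_1$. Extending $\rho$ by the identity on the remaining variables yields a variable renaming with $\rho s_2 = s_1$, i.e.\ $s_1$ and $s_2$ are equal modulo $\alpha$-conversion. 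I expect the only delicate point to be the bookkeeping behind ``modulo $\alpha$-conversion'' — checking that $\rho$ is a genuine bijective renaming and that non-variable tokens line up exactly; the rest is a direct application of Lemma~\ref{lemma:noncontractive} and Definition~\ref{dfn:subst}.
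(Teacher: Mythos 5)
Your proof is correct and follows essentially the same route as the paper's: identity substitution for reflexivity, composition of substitutions for transitivity, and Lemma~\ref{lemma:noncontractive} to force equal lengths and single-token images in the antisymmetry case, followed by the variable/non-variable case split. You are in fact somewhat more explicit than the paper in verifying that the variable correspondence is a genuine bijection (the paper dismisses this as ``straightforward to verify''), which is a welcome addition but not a different argument.
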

\begin{proof}

We prove the 3 statements separately.

\begin{enumerate}
    \item Let $\epsilon$ be the identity substitution mapping any variable to itself, i.e., $\forall v \in \Sigma_{v}, \epsilon(v) = v$. According to Definition~\ref{dfn:subst}, $\epsilon$ also maps any token to itself ($\forall t \in \Sigma, \epsilon t = t$), and therefore any sentence to itself ($\forall s \in \Sigma^{+}, \epsilon s = s$). Applying Definition~\ref{dfn:order}, we have $\forall s \in \Sigma^{+}, s \leq s$.
    
    \item Given two sentences $s_1 = (t_1, t_2, \dots, t_{n})$, $s_2 =  (t_1', t_2', \dots, t_{m}')$ such that $s_1 \leq s_2$ and $s_2 \leq s_1$, there exist substitutions $\sigma$, $\varphi$ such that $s_1 = \sigma s_2$ and $s_2 = \varphi s_1$ (Definition~\ref{dfn:order}). Applying Lemma~\ref{lemma:noncontractive} to them separately leads to $n = m$ and $\forall i, \text{length}(\varphi t_i) = \text{length}(\sigma t_i') = 1$. According to Definition~\ref{dfn:subst}, we derive $\forall i, t_i = \sigma t_i' \text{ and } t_i' = \varphi t_i$. If $t_i$ is not a variable, $t_i' = \varphi t_i = t_i$, i.e., all non-variable tokens in $s_1$ and $s_2$ are identical. If $t_i$ is a variable, $t_i'$ must also be a variable because otherwise $t_i = \sigma t_i'$ would not be a variable. Therefore, both $\sigma$ and $\varphi$ are just renaming variables. And it is straightforward to verify that they cannot map different variables to the same. In other words, $\sigma$ and $\varphi$ are $\alpha-$conversions; $s_1 = s_2$ modulo $\alpha$-conversion.
    
    \item Given three sentences $s_1$, $s_2$, and $s_3$ such that $s_1 \leq s_2$ and $s_2 \leq s_3$, there exist substitutions $\sigma$ and $\varphi$ such that $s_1 = \sigma s_2$ and $s_2 = \varphi s_3$. Let $\mu = \sigma \circ \varphi$ be the function composite of $\sigma$ and $\varphi$. $\mu$ is also a substitution and $s_1 = \mu s_3$. Therefore, $s_1 \leq s_3$.
\end{enumerate}

\end{proof}

\begin{theorem}[Partial order among rules]
\label{thm:rule_partial_order}
The $\leq$ in Definition~\ref{dfn:order} is a partial order among rules. In other words, 
\begin{enumerate}
    \item For any rule $r$, $r \leq r$.
    \item For any two rules $r_1$ and $r_2$, if $r_1 \leq r_2$ and $r_2 \leq r_1$, then $r_1 = r_2$ modulo $\alpha$-conversion.
    \item For any three rules $r_1$, $r_2$ and $r_3$, if $r_1  \leq r_2$ and $r_2 \leq r_3$, then $r_1 \leq r_3$.
\end{enumerate}
\end{theorem}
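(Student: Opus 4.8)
The plan is to mirror, almost line for line, the proof of Theorem~\ref{thm:sentence_partial_order}, lifting each argument from sentences to rules and carefully threading through the two equivalences built into rule equality (premise reordering and $\alpha$-conversion). For reflexivity, I would take the identity substitution $\epsilon$ of Definition~\ref{dfn:subst}: it fixes every token, hence fixes each premise and the conclusion of $r$, so $\epsilon r = r$ and thus $r \leq r$. For transitivity, given $r_1 = \sigma r_2$ and $r_2 = \varphi r_3$, I would form the composite $\mu = \sigma \circ \varphi$; it is again a substitution (it maps $\Sigma_v$ into $\Sigma^{+}_{-s}$, since $\sigma$ introduces no special symbols), and since applying a substitution to a rule means applying it premise-wise and to the conclusion, the sentence-level identity $\sigma(\varphi s) = (\sigma\circ\varphi)s$ (part 3 of Theorem~\ref{thm:sentence_partial_order}) gives $r_1 = \mu r_3$, so $r_1 \leq r_3$. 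Here the reordering and any $\alpha$-renaming witnessing $r_2 = \varphi r_3$ are harmless: a variable renaming can be absorbed into $\mu$, and a permutation of $r_3$'s premises is carried along by $\sigma$, so only bookkeeping is involved.

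The real content is antisymmetry. I would write $r_1 = (p_1;\dots;p_n \vdash c)$ and $r_2 = (q_1;\dots;q_m \vdash d)$. From $r_1 \leq r_2$ and $r_2 \leq r_1$, after absorbing the $\alpha$-renamings into the witnessing substitutions, there are substitutions $\sigma,\varphi$ and bijections $\pi,\tau$ of premise indices (so $n=m$) with $p_i = \sigma q_{\pi(i)}$, $c = \sigma d$, $q_j = \varphi p_{\tau(j)}$, $d = \varphi c$, all as literal equalities. Summing lengths over all premises and the conclusion and applying Lemma~\ref{lemma:noncontractive} twice gives $\sum_i \text{length}(p_i) + \text{length}(c) \geq \sum_j \text{length}(q_j) + \text{length}(d) \geq \sum_i \text{length}(p_i) + \text{length}(c)$, so every inequality is an equality; by the equality clause of Lemma~\ref{lemma:noncontractive}, $\sigma$ sends every token occurring anywhere in $r_2$ to a length-$1$ sentence (and symmetrically for $\varphi$). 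Then, exactly as in case 2 of Theorem~\ref{thm:sentence_partial_order}, $\sigma$ must fix every non-variable token and act as a renaming on the variables occurring in $r_2$; examining $\varphi\circ\sigma$ on those variables shows this renaming is injective, hence a bijection, i.e. an $\alpha$-conversion. Since $p_i = \sigma q_{\pi(i)}$ for all $i$ and $c = \sigma d$ under this single global renaming, reordering the premises of $r_2$ by $\pi$ and applying the $\alpha$-conversion $\sigma$ turns $r_2$ into $r_1$; hence $r_1 = r_2$ modulo premise reordering and $\alpha$-conversion.

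The main obstacle I expect is the bookkeeping inside antisymmetry rather than any deep idea: establishing that $n = m$ from the reordering equivalence, that a \emph{single} substitution (not an independent renaming per premise) realizes all the identities $p_i = \sigma q_{\pi(i)}$ simultaneously and consistently, and that this renaming is bijective; the premise permutation $\pi$ has to be carried correctly through the length count and through the final identification of $r_1$ with $r_2$. Everything else (reflexivity, transitivity) is a direct reuse of the sentence-level results with routine verifications that substitutions extend to rules componentwise.
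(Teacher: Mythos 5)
Your proof is correct and follows exactly the route the paper intends: the paper's own proof of Theorem~\ref{thm:rule_partial_order} is simply the one-line remark ``Similar to the proof of Theorem~\ref{thm:sentence_partial_order},'' and your argument is precisely that lift, with the identity substitution for reflexivity, composition for transitivity, and the length/noncontractivity argument for antisymmetry. The extra bookkeeping you supply for premise reordering and for realizing all the identities with a single global renaming is sound and in fact more explicit than what the paper records.
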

\begin{proof}
Similar to the proof of Theorem~\ref{thm:sentence_partial_order}.
\end{proof}

\begin{definition}[Strictly partial order among sentences/rules]
Let $s_1$ and $s_2$ be two sentences, $s_2$ is strictly more general than $s_1$ (denoted by $s_1 < s_2$) if and only if $s_1 \leq s_2$ and $s_1 \neq s_2$ modulo $\alpha$-conversion. 
Similarly, if $r_1$ and $r_2$ are rules, $r_1 < r_2$ if and only if $r_1 \leq r_2$ and $r_1 \neq r_2$ modulo $\alpha$-conversion. 
\end{definition}

\section{Unification and Anti-unification of Sentences and Rules}
\label{appendix:unification}

Unification and anti-unification~\citep{plotkin1970note,robinson2001handbook} are basic symbolic procedures in formal logic that are useful for theorem proving and logic programming~\citep{russell2002artificial,yernaux2019anti}. In MetaQNL, unification is used in backward chaining, and anti-unification is used to abstract concrete rules into rules with variables. We adapt existing problem setups and algorithms from formal logic to MetaQNL. The algorithms we use for MetaQNL do not have theoretical guarantees as in formal logic, but they work well in practice. The anti-unifiers they compute may not satisfy the conditions of most specific anti-unifiers (Definition~\ref{dfn:most_specific_antiunifiers}). But strict anti-unification is not necessary for rule abstraction to work. In principle, all we need is a procedure for generating abstract rules from concrete ones.

\smallsec{Unification}
Given two sentences (or two rules), unification aims to find substitutions mapping them to the same sentence (or rule). Such substitutions are called unifiers. We extend unification to MetaQNL by adapting prior work, especially the unification algorithm developed by \citet{kutsia2002unification} for a variant of first-order logic with sequence variables and flexible arity symbols.

\begin{definition}[Unifier]
A substitution $\sigma: \Sigma_{v} \rightarrow \Sigma^{+}_{-s}$ is a unifier of two sentences $s_1, s_2 \in \Sigma^{+}$ if and only if $\sigma s_1 = \sigma s_2$. Similarly, it is a unifier of two rules $r_1$ and $r_2$ if and only if $\sigma r_1 = \sigma r_2$.
\end{definition}

Two sentences may have multiple unifiers. Taking $s_1 = \texttt{[X] is [Y]}$, $s_2 = \texttt{The elphant [Z]}$ as an example, their unifiers include $\sigma = \{\texttt{[X]} \rightarrow \texttt{The elephant}, \texttt{[Z]} \rightarrow \texttt{is [Y]} \}$, $\varphi = \{\texttt{[X]} \rightarrow \texttt{The elephant}, \texttt{[Y]} \rightarrow \texttt{tall}, \texttt{[Z]} \rightarrow \texttt{is tall} \}$, etc. Both $\sigma$ and $\varphi$ are valid unifiers, but they lead to different sentences when applied: $\sigma s_1 = \texttt{The elephant is [Y]}$, $\varphi s_1 = \texttt{The elephant is tall}$.
We prefer $\sigma$ to $\varphi$ because it is more general; it does not introduce any new information not in $s_1$ and $s_2$. In contrast, we cannot infer the ``\texttt{tall}'' in $\varphi$. This is the intuition behind the concept of ``most general unifiers''.

\begin{definition}[Most general unifier]
Let the substitution $\sigma$ be a unifier of sentencens $s_1$ and $s_2$, it is a most general unifier if and only if there is no unifier $\varphi$ of $s_1$ and $s_2$ such that $\sigma s_1 < \varphi s_1$.
\end{definition}

In unification, we want to compute a set of most general unifiers, and we want the set to be minimal and complete. Below we define these concepts for sentences. 

\begin{definition}[Complete set of unifiers]
Let $\mathcal{U}$ be a set of unifiers of sentences $s_1$ and $s_2$, $\mathcal{U}$ is complete if and only if for any unifier $\varphi$ of $s_1$ and $s_2$, there exists a unifier $\sigma \in \mathcal{U}$, such that $\varphi s_1 \leq \sigma s_1$.
\end{definition}

\begin{definition}[Minimal set of unifiers]
Let $\mathcal{U}$ be a set of unifiers of sentences $s_1$ and $s_2$, $\mathcal{U}$ is minimal if and only if for any $\sigma, \varphi \in \mathcal{U}$, $\varphi s_1 \leq \sigma s_1$ implies $\sigma = \varphi$ (modulo $\alpha$-conversion).
\end{definition}

\begin{definition}[Minimal complete set of unifiers]
Let $\mathcal{U}$ be a set of unifiers of sentences $s_1$ and $s_2$, $\mathcal{U}$ is a minimal complete set of unifiers if and only if it is both minimal and complete.
\end{definition}

The definitions for rules are parallel. Given two sentences (or two rules), the unification problem is to compute a minimal complete set of unifiers. The result can be empty (e.g., unifying ``\texttt{hello world}'' and ``\texttt{how are you}''), finite (``\texttt{hello [X]}'' and ``\texttt{[Y] world}''), or infinite (``\texttt{hello [X]}'' and ``\texttt{[X] hello}'', Fig.~\ref{fig:unification} \textit{Left}).

\begin{figure}[t]
  \centering
   \includegraphics[width=1.0\linewidth]{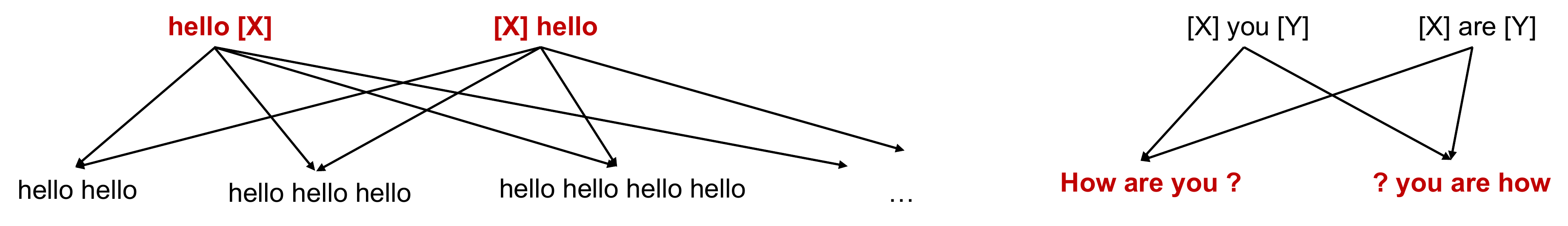}
  \caption{The minimal complete set of unifiers of two sentences can be empty, finite, or infinite (e.g., ``\texttt{hello [X]}'' and ``\texttt{[X] hello}''). The minimal complete set of anti-unifiers is non-empty and finite.}
  \label{fig:unification}
\end{figure}

\smallsec{Anti-unification}
Given two sentences (or two rules), anti-unification aims to generalize them into a single sentence (or rule). Anti-unification has also been studied in formal logic~\citep{plotkin1970note,kutsia2014anti}. We extend it to MetaQNL by adapting prior work. For simplicity, we define anti-unification only for sentences, but it applies to rules as well.

\begin{definition}[Anti-unifier]
\label{dfn:antiunification}
Given two sentences $s_1$ and $s_2$, their anti-unifier is a triple $(s, \sigma_1, \sigma_2)$ of a sentence $s$ and two subsitutions $\sigma_1$, $\sigma_2$, such that $\sigma_1 s = s_1$ and $\sigma_2 s = s_2$.
\end{definition}

\begin{definition}[Most specific anti-unifier]
\label{dfn:most_specific_antiunifiers}
Let $(s, \sigma_1, \sigma_2)$ be an anti-unifier of sentencens $s_1$ and $s_2$, it is a most specific anti-unifier if and only if there is no substitution $\varphi$, $\sigma_1'$ and $\sigma_2'$ such that
\begin{enumerate}
    \item $\sigma_1 = \sigma_1' \circ \varphi$, $\sigma_2 = \sigma_2' \circ \varphi$
    \item $\varphi s < s$
    \item $(\varphi s, \sigma_1', \sigma_2')$ is also an anti-unifier of $s_1$ and $s_2$
\end{enumerate}
\end{definition}

\begin{definition}[Complete set of anti-unifiers]
\label{dfn:antiunification_complete}
Let $\mathcal{A}$ be a set of anti-unifiers of sentences $s_1$ and $s_2$, $\mathcal{A}$ is complete if and only if for any anti-unifier $(s, \sigma_1, \sigma_2)$ of $s_1$ and $s_2$, there exists a substitution $\varphi$ and an anti-unifier $(\varphi s, \sigma_1', \sigma_2')$ such that $\sigma_1 = \sigma_1' \circ \varphi$, $\sigma_2 = \sigma_2' \circ \varphi$.
\end{definition}

\begin{definition}[Minimal set of anti-unifiers]
Let $\mathcal{A}$ be a set of anti-unifiers of sentences $s_1$ and $s_2$, $\mathcal{A}$ is minimal if and only if for any $(s, \sigma_1, \sigma_2), (s', \sigma_1', \sigma_2') \in \mathcal{A}$, 
if there exists a substitution $\varphi$ such that
\begin{enumerate}
    \item $s' = \varphi s$
    \item $\sigma_1 = \sigma_1' \circ \varphi$, $\sigma_2 = \sigma_2' \circ \varphi$
\end{enumerate}
then $\varphi$ must be an $\alpha$-conversion, i.e. $\varphi s = s$ (modulo $\alpha$-conversion).
\end{definition}

\begin{definition}[Minimal complete set of anti-unifiers]
Let $\mathcal{A}$ be a set of anti-unifiers of sentences $s_1$ and $s_2$, $\mathcal{A}$ is a minimal complete set of anti-unifiers if and only if it is both minimal and complete.
\end{definition}

Given two sentences (or two rules), the anti-unification problem is to compute a minimal complete set of anti-unifiers. Unlike unification, the result of anti-unification must be non-empty and finite (Fig.~\ref{fig:unification}).

\begin{theorem}[Anti-unification is finitary]
Let $\mathcal{A}$ be a minimal complete set of anti-unifiers of sentences $s_1$ and $s_2$, then $\mathcal{A}$ is non-empty and finite.
\end{theorem}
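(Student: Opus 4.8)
The plan is to prove both claims—non-emptiness and finiteness—by exhibiting a concrete, terminating recursive procedure that computes a complete set of anti-unifiers, and then arguing the set it produces is finite and can be thinned to a minimal one. For non-emptiness, the key observation is that there is always a trivial anti-unifier: take a fresh variable $v \in \Sigma_v$ and set $s = (v)$, $\sigma_1(v) = s_1$, $\sigma_2(v) = s_2$; then $\sigma_1 s = s_1$ and $\sigma_2 s = s_2$, so $(s,\sigma_1,\sigma_2)$ satisfies Definition~\ref{dfn:antiunification}. Hence any complete set $\mathcal{A}$ must contain at least one anti-unifier that generalizes this trivial one, so $\mathcal{A} \neq \varnothing$; and since $\mathcal{A}$ is also minimal it cannot be empty (the empty set is not complete, as it fails to cover the trivial anti-unifier). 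This handles non-emptiness cleanly from the definitions alone.

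For finiteness, I would first bound the "size" of any anti-unifier appearing in a minimal complete set. The crucial point is a length/structure bound: by Lemma~\ref{lemma:noncontractive}, substitutions are noncontractive, so if $(s,\sigma_1,\sigma_2)$ is an anti-unifier then $\mathrm{length}(s) \le \mathrm{length}(\sigma_1 s) = \mathrm{length}(s_1)$ and likewise $\mathrm{length}(s) \le \mathrm{length}(s_2)$; hence $\mathrm{length}(s) \le \min(\mathrm{length}(s_1),\mathrm{length}(s_2))$. So the generalization $s$ has bounded length. Next I would argue that in a \emph{minimal} complete set, the anti-unifiers may be taken so that the word tokens appearing in $s$ are among the tokens of $s_1$ (equivalently of $s_2$, at positions where both agree): any token in $s$ that is not a variable must, after applying $\sigma_1$, be that same token in $s_1$ (substitutions fix non-variable tokens, Definition~\ref{dfn:subst}), so non-variable tokens of $s$ form a sub-multiset of the tokens of $s_1$. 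Combined with the length bound, there are only finitely many sentences $s$ over the relevant finite alphabet and bounded length that can serve as the generalization component, up to $\alpha$-conversion (renaming of the finitely many variable slots). Since minimality forces distinct elements of $\mathcal{A}$ to have incomparable generalization components up to $\alpha$-conversion, $\mathcal{A}$ injects into this finite set, so $|\mathcal{A}| < \infty$.

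The main obstacle, I expect, is making the "up to $\alpha$-conversion" bookkeeping rigorous: a minimal complete set is defined via existence of a substitution $\varphi$ relating two anti-unifiers together with the side conditions $\sigma_1 = \sigma_1' \circ \varphi$, $\sigma_2 = \sigma_2'\circ\varphi$, and one must verify that the finitely many candidate generalizations really do inject into $\mathcal{A}$—i.e., that two elements of a minimal set cannot share the same $s$-component modulo renaming. This requires checking that if $s' = \varphi s$ with $\varphi$ an $\alpha$-conversion and the substitution conditions hold, then the two triples coincide in the sense used by the minimality definition; this is the delicate step where one has to be careful that the substitutions $\sigma_1,\sigma_2$ are determined (up to the renaming) by $s$ together with the requirement $\sigma_i s = s_i$, at least on the variables actually occurring in $s$. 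Once that determinacy is nailed down, the counting argument closes: finitely many $s$ (bounded length, word tokens drawn from $s_1$, finitely many variable patterns up to renaming) $\Rightarrow$ finitely many anti-unifiers in any minimal complete set.
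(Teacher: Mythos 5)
Your proposal is correct and follows essentially the same route as the paper's proof: non-emptiness via the trivial single-variable anti-unifier together with completeness, and finiteness via the noncontractivity lemma bounding $\mathrm{length}(s)$ by $\mathrm{length}(s_1)$, the observation that non-variable tokens of $s$ must come from $s_1$, and a count of the resulting finitely many candidate generalizations modulo $\alpha$-conversion. If anything, you are more explicit than the paper about the final bookkeeping step (why finitely many $s$-components yields a finite $\mathcal{A}$), which the paper leaves implicit.
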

\begin{proof}
For any sentences $s_1$ and $s_2$, we have a trivial anti-unifier $(\texttt{[X]}, \sigma_1, \sigma_2)$ where $\sigma_1 = \{\texttt{[X]} \rightarrow s_1\}$ and $\sigma_2 = \{\texttt{[X]} \rightarrow s_2\}$.
Since $\mathcal{A}$ is complete, apply Definition~\ref{dfn:antiunification_complete} and we will know $\mathcal{A}$ must be non-empty.

For any anti-unifier $(s, \varphi_1, \varphi_2) \in \mathcal{A}$, we have $\varphi_1 s = s_1$ (Definition~\ref{dfn:antiunification}). Apply Lemma~\ref{lemma:noncontractive} to derive $\text{length}(s) \leq \text{length}(\varphi_1 s) = \text{length}(s_1)$. Therefore, the length of $s$ is bounded. Also, $s$ cannot have non-variable tokens besides those in $s_1$, so its vocabulary is also bounded. There are finite number of different sentences that $s$ can take (modulo $\alpha$-conversion). Therefore, $\mathcal{A}$ must also be finite.
\end{proof}

Our current anti-unification algorithm is adapted from \citet{kutsia2014anti}. It recursively matches the beginning of two sentences. Let $s_1$ and $s_2$ be sentences, and $s$ is a more general sentence in their anti-unifier. If $s_1$ and $s_2$ start with the same word $w$, $s$ should also start with $w$. Otherwise, $s$ should start with a variable corresponding to some prefixes of $s_1$ and $s_2$. The algorithm searches for all such prefixes and anti-unifies the remaining parts of the sentences recursively.

\section{Details of MiniSCAN Experiments}
\label{appendix:miniscan}

The 14 MiniSCAN~\citep{lake2019human} training examples represented as sentences in MetaQNL (\texttt{\$MAPS\_TO\$} is a special symbol):

\begin{eqnarray*}
\texttt{dax} & \texttt{\$MAPS\_TO\$} & \texttt{RED} \\
\texttt{lug} & \texttt{\$MAPS\_TO\$} & \texttt{BLUE} \\ 
\texttt{wif} & \texttt{\$MAPS\_TO\$} & \texttt{GREEN} \\
\texttt{zup} & \texttt{\$MAPS\_TO\$} & \texttt{YELLOW} \\
\texttt{dax fep} & \texttt{\$MAPS\_TO\$} & \texttt{RED RED RED} \\
\texttt{lug fep} & \texttt{\$MAPS\_TO\$} & \texttt{BLUE BLUE BLUE} \\
\texttt{wif blicket dax} & \texttt{\$MAPS\_TO\$} & \texttt{GREEN RED GREEN} \\
\texttt{lug blicket wif} & \texttt{\$MAPS\_TO\$} & \texttt{BLUE GREEN BLUE} \\
\texttt{dax kiki lug} & \texttt{\$MAPS\_TO\$} & \texttt{BLUE RED} \\
\texttt{lug kiki wif} & \texttt{\$MAPS\_TO\$} & \texttt{GREEN BLUE} \\
\texttt{lug fep kiki wif} & \texttt{\$MAPS\_TO\$} & \texttt{GREEN BLUE BLUE BLUE} \\
\texttt{lug kiki wif fep} & \texttt{\$MAPS\_TO\$} & \texttt{GREEN GREEN GREEN BLUE} \\
\texttt{wif kiki dax blicket lug} & \texttt{\$MAPS\_TO\$} & \texttt{RED BLUE RED GREEN} \\
\texttt{wif blicket dax kiki lug} & \texttt{\$MAPS\_TO\$} & \texttt{BLUE GREEN RED GREEN}
\end{eqnarray*}

The 10 testing examples:

\begin{eqnarray*}
\texttt{zup fep} & \texttt{\$MAPS\_TO\$} & \texttt{YELLOW YELLOW YELLOW} \\
\texttt{zup blicket lug} & \texttt{\$MAPS\_TO\$} & \texttt{YELLOW BLUE YELLOW} \\
\texttt{zup kiki dax} & \texttt{\$MAPS\_TO\$} & \texttt{RED YELLOW} \\
\texttt{zup fep kiki lug} & \texttt{\$MAPS\_TO\$} & \texttt{BLUE YELLOW YELLOW YELLOW} \\
\texttt{wif kiki zup fep} & \texttt{\$MAPS\_TO\$} & \texttt{YELLOW YELLOW YELLOW GREEN} \\
\texttt{lug kiki wif blicket zup} & \texttt{\$MAPS\_TO\$} & \texttt{GREEN YELLOW GREEN BLUE} \\
\texttt{zup blicket wif kiki dax fep} & \texttt{\$MAPS\_TO\$} & \texttt{RED RED RED YELLOW GREEN YELLOW} \\
\texttt{zup blicket zup kiki zup fep} & \texttt{\$MAPS\_TO\$} & \texttt{YELLOW YELLOW YELLOW YELLOW YELLOW YELLOW} \\
\texttt{dax blicket zup} & \texttt{\$MAPS\_TO\$} & \texttt{RED YELLOW RED} \\
\texttt{wif kiki zup} & \texttt{\$MAPS\_TO\$} & \texttt{YELLOW GREEN}
\end{eqnarray*}

Below are some examples of candidate rules generated by the rule proposer. Note that many of them are wrong because the premises are not sufficient to deduce the conclusion.

\begin{eqnarray*}
& \vdash & \texttt{lug fep kiki wif \$MAPS\_TO\$} \\
& & \quad \quad \texttt{GREEN BLUE BLUE BLUE} \\
\texttt{dax \$MAPS\_TO\$ RED} & \vdash & \texttt{wif blicket dax \$MAPS\_TO\$ GREEN RED GREEN} \\
\texttt{lug \$MAPS\_TO\$ BLUE} & \vdash & \texttt{lug kiki wif \$MAPS\_TO\$ GREEN BLUE} \\
\texttt{lug \$MAPS\_TO\$ BLUE} & \vdash & \texttt{lug fep \$MAPS\_TO\$ BLUE BLUE BLUE} \\
\texttt{dax \$MAPS\_TO\$ RED; lug \$MAPS\_TO\$ BLUE} & \vdash & \texttt{wif blicket dax kiki lug \$MAPS\_TO\$} \\
& & \quad \quad \texttt{BLUE GREEN RED GREEN}
\end{eqnarray*}

MetaInduce learns 7 rules corresponding to the ground truth rules of MiniSCAN:

\begin{eqnarray*}
& \vdash & \texttt{dax \$MAPS\_TO\$ RED} \\
& \vdash & \texttt{lug \$MAPS\_TO\$ BLUE} \\ 
& \vdash & \texttt{wif \$MAPS\_TO\$ GREEN} \\
& \vdash & \texttt{zup \$MAPS\_TO\$ YELLOW} \\
\texttt{[A] \$MAPS\_TO\$ [B]} & \vdash & \texttt{[A] fep \$MAPS\_TO\$ [B] [B] [B]} \\
\texttt{[A] \$MAPS\_TO\$ [B]; [C] \$MAPS\_TO\$ [D]} & \vdash & \texttt{[A] kiki [C] \$MAPS\_TO\$ [D] [B]} \\
\texttt{[A] \$MAPS\_TO\$ [B]; [C] \$MAPS\_TO\$ [D]} & \vdash & \texttt{[A] blicket [C] \$MAPS\_TO\$ [B] [D] [B]}
\end{eqnarray*}

\section{Details of SCAN Experiments}
\label{appendix:scan}

Some examples in SCAN~\citep{lake2018generalization}:

\begin{eqnarray*}
\texttt{walk} & \texttt{\$MAPS\_TO\$} & \texttt{WALK} \\
\texttt{jump} & \texttt{\$MAPS\_TO\$} & \texttt{JUMP} \\
\texttt{turn right} & \texttt{\$MAPS\_TO\$} & \texttt{RIGHT} \\
\texttt{jump after turn left} & \texttt{\$MAPS\_TO\$} & \texttt{LEFT JUMP} \\
\texttt{walk right} & \texttt{\$MAPS\_TO\$} & \texttt{RIGHT WALK} \\
\texttt{walk after run} & \texttt{\$MAPS\_TO\$} & \texttt{RUN WALK} \\
\texttt{turn left twice} & \texttt{\$MAPS\_TO\$} & \texttt{LEFT LEFT} \\
\texttt{turn opposite left} & \texttt{\$MAPS\_TO\$} & \texttt{LEFT LEFT} \\
\texttt{turn around right} & \texttt{\$MAPS\_TO\$} & \texttt{RIGHT RIGHT RIGHT RIGHT} \\
\texttt{walk around left} & \texttt{\$MAPS\_TO\$} & \texttt{LEFT WALK LEFT WALK LEFT WALK LEFT WALK}
\end{eqnarray*}

Below are some examples of the candidate rules generated by the rule proposer. 

\begin{eqnarray*}
\texttt{run \$MAPS\_TO\$ RUN} & \vdash & \texttt{walk after run \$MAPS\_TO\$ RUN WALK} \\
\texttt{walk \$MAPS\_TO\$ WALK; run \$MAPS\_TO\$ RUN} & \vdash & \texttt{walk after run \$MAPS\_TO\$ RUN WALK} \\
\texttt{run \$MAPS\_TO\$ RUN}  & \vdash & \texttt{jump twice after run twice \$MAPS\_TO\$} \\
& & \quad \quad \texttt{RUN RUN JUMP JUMP} \\
\texttt{run twice \$MAPS\_TO\$ RUN RUN}  & \vdash & \texttt{jump twice after run twice \$MAPS\_TO\$} \\
& & \quad \quad \texttt{RUN RUN JUMP JUMP} \\
\end{eqnarray*}

MetaInduce learns 20 rules corresponding to the ground truth rules of SCAN:

\begin{eqnarray*}
& \vdash & \texttt{walk \$MAPS\_TO\$ WALK} \\
& \vdash & \texttt{look \$MAPS\_TO\$ LOOK} \\
& \vdash & \texttt{run \$MAPS\_TO\$ RUN} \\
& \vdash & \texttt{jump \$MAPS\_TO\$ JUMP} \\
& \vdash & \texttt{turn right \$MAPS\_TO\$ RIGHT} \\
& \vdash & \texttt{turn left \$MAPS\_TO\$ LEFT} \\
& \vdash & \texttt{turn opposite left \$MAPS\_TO\$} \\
& & \quad \quad \texttt{LEFT LEFT} \\
& \vdash & \texttt{turn opposite right \$MAPS\_TO\$} \\
& & \quad \quad \texttt{RIGHT RIGHT} \\
& \vdash & \texttt{turn around left \$MAPS\_TO\$} \\
& & \quad \quad \texttt{LEFT LEFT LEFT LEFT} \\
& \vdash & \texttt{turn around right \$MAPS\_TO\$} \\
& & \quad \quad \texttt{RIGHT RIGHT RIGHT RIGHT} \\
\texttt{[A] \$MAPS\_TO\$ [B]} & \vdash & \texttt{[A] left \$MAPS\_TO\$ LEFT [B]} \\
\texttt{[A] \$MAPS\_TO\$ [B]} & \vdash & \texttt{[A] right \$MAPS\_TO\$ RIGHT [B]} \\\
\texttt{[A] \$MAPS\_TO\$ [B]} & \vdash & \texttt{[A] opposite left \$MAPS\_TO\$} \\
& & \quad \quad \texttt{LEFT LEFT [B]} \\
\texttt{[A] \$MAPS\_TO\$ [B]} & \vdash & \texttt{[A] opposite right \$MAPS\_TO\$} \\
& & \quad \quad \texttt{RIGHT RIGHT [B]} \\
\texttt{[A] \$MAPS\_TO\$ [B]} & \vdash & \texttt{[A] around left \$MAPS\_TO\$} \\ 
& & \quad \quad \texttt{LEFT [B] LEFT [B] LEFT [B] LEFT [B]} \\
\texttt{[A] \$MAPS\_TO\$ [B]} & \vdash & \texttt{[A] around right \$MAPS\_TO\$} \\
& & \quad \quad \texttt{RIGHT [B] RIGHT [B] RIGHT [B] RIGHT [B]} \\
\texttt{[A] \$MAPS\_TO\$ [B]} & \vdash & \texttt{[A] twice \$MAPS\_TO\$ [B] [B]} \\
\texttt{[A] \$MAPS\_TO\$ [B]} & \vdash & \texttt{[A] thrice \$MAPS\_TO\$ [B] [B] [B]} \\
\texttt{[A] \$MAPS\_TO\$ [B]; [C] \$MAPS\_TO\$ [D]} & \vdash & \texttt{[C] and [A] \$MAPS\_TO\$ [D] [B]} \\
\texttt{[A] \$MAPS\_TO\$ [B]; [C] \$MAPS\_TO\$ [D]} & \vdash & \texttt{[A] after [C] \$MAPS\_TO\$ [D] [B]} \\
\end{eqnarray*}

\section{Details of RuleTaker Experiments}
\label{appendix:ruletaker}

\smallsec{Rule Proposer}
Fig.~\ref{fig:groundtruthproof} shows the form of ground truth proofs in RuleTaker. For this specific example, our rule proposer would generate 2 candidate rules below:

\begin{figure}[h]
  \centering
   \includegraphics[width=0.8\linewidth]{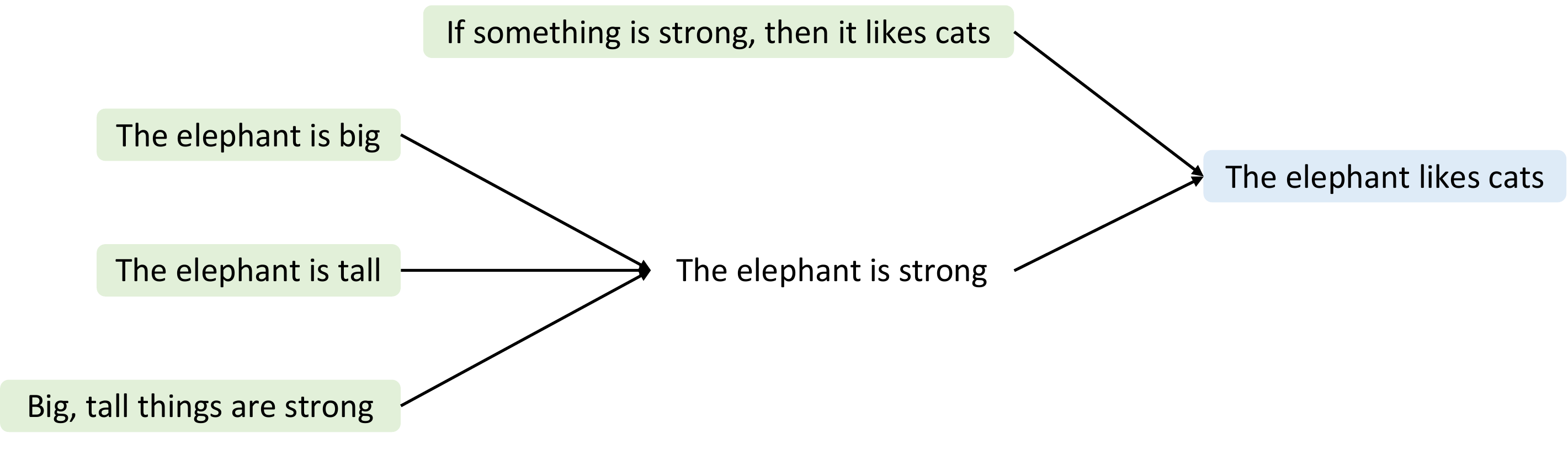}
  \caption{RuleTaker contains ground truth proofs in the form of directed acyclic graphs from the \hlgreen{assumptions} to the \hlblue{conclusion}. The nodes in the graph are concrete sentences without variables.}
  \label{fig:groundtruthproof}
\end{figure}

\begin{eqnarray*}
& & \texttt{\$TRUE\$ The elephant is big}; \\
& & \texttt{\$TRUE\$ The elephant is tall}; \\
& & \texttt{\$TRUE\$ Big , tall things are strong}; \\
& \vdash & \texttt{\$TRUE\$ The elephant is strong}
\end{eqnarray*}

\begin{eqnarray*}
& & \texttt{\$TRUE\$ The elephant is strong}; \\
& & \texttt{\$TRUE\$ If something is strong , then it likes cats}; \\
& \vdash & \texttt{\$TRUE\$ The elephant likes cats}
\end{eqnarray*}

\smallsec{Learned Rules}
Below are some example rules learned from RuleTaker:

\begin{eqnarray*}
& & \texttt{\$TRUE\$ the [A] likes the [B]}; \\
& & \texttt{\$TRUE\$ the [A] is [C]}; \\
& & \texttt{\$TRUE\$ if someone is [C] and they like the [B] then the [D]}; \\
& \vdash & \texttt{\$TRUE\$ the [D]}
\end{eqnarray*}

\begin{eqnarray*}
& & \texttt{\$TRUE\$ the [A] does not see the [B]}; \\
& \vdash & \texttt{\$FALSE\$ the [A] sees the [B]}
\end{eqnarray*}

\begin{eqnarray*}
& & \texttt{\$TRUE\$ the [A] [B] the [C]}; \\
& & \texttt{\$TRUE\$ the [C] [D]}; \\
& & \texttt{\$TRUE\$ if someone [B] the [C] and the [C] [D] then they need the [E]}; \\
& \vdash & \texttt{\$TRUE\$ the [A] needs the [E]}
\end{eqnarray*}

\begin{eqnarray*}
& & \texttt{\$TRUE\$ [A] is [B]}; \\
& & \texttt{\$TRUE\$ [A] is [C]}; \\
& & \texttt{\$TRUE\$ [B] , [C] things are [E]}; \\
& \vdash & \texttt{\$TRUE\$ [A] is [E]}
\end{eqnarray*}

\begin{eqnarray*}
& & \texttt{\$TRUE\$ the [A] is [B]}; \\
& & \texttt{\$TRUE\$ the [A] is [C]}; \\
& & \texttt{\$TRUE\$ if someone is [C] and [B] then they chase the [D]}; \\
& \vdash & \texttt{\$TRUE\$ the [A] chases the [D]}
\end{eqnarray*}

\begin{eqnarray*}
\texttt{\$TRUE\$  [A] is [B]} \vdash \texttt{\$FALSE\$ [A] is not [B]}
\end{eqnarray*}

\begin{eqnarray*}
& & \texttt{\$TRUE\$ if something [B] then it [C]}; \\
& & \texttt{\$TRUE\$ [A] [B]} \\
& \vdash & \texttt{\$TRUE\$ [A] [C]}
\end{eqnarray*}

\section{Details of SIGMORPHON 2018 Experiments}
\label{appendix:sigmorphon}

\smallsec{Rule Proposer}
The rule proposer simply generates concrete rules that can prove the goals in a single step. For the previous example, it generates the 6 candidate rules below:
\begin{eqnarray*}
\texttt{z a r a n d e a m o s} & \vdash & \texttt{\$LEMMA\$ z a r a n d e a r} \\
\texttt{z a r a n d e a m o s} & \vdash & \texttt{\$TAG\$ V} \\
\texttt{z a r a n d e a m o s} & \vdash & \texttt{\$TAG\$ IND} \\
\texttt{z a r a n d e a m o s} & \vdash & \texttt{\$TAG\$ PRS} \\
\texttt{z a r a n d e a m o s} & \vdash & \texttt{\$TAG\$ 1} \\
\texttt{z a r a n d e a m o s} & \vdash & \texttt{\$TAG\$ PL} \\
\end{eqnarray*}

\smallsec{Learned Rules}
Below are some example rules learned on the Spanish part of SIGMORPHON 2018:
\begin{eqnarray*}
\texttt{[A] e a m o s} & \vdash & \texttt{\$LEMMA\$ [A] e a r} \\
\texttt{[A] á r a m o s} & \vdash & \texttt{\$TAG\$ PST} \\
\texttt{n o s - [A] e m o s} & \vdash & \texttt{\$TAG\$ SBJV} \\
\texttt{n o - [A] z c a m o s} & \vdash & \texttt{\$LEMMA\$ [A] c e r} \\
\texttt{[A] z a m o s} & \vdash & \texttt{\$LEMMA\$ [A] z a r}
\end{eqnarray*}

\section{Heuristics for constraining the space of rules}
We use a few simple and general heuristics for constraining the space of rules and pruning invalid rules generated by anti-unification. First, we merge multiple variables that always appear together. For example, the \texttt{[A] [B] [C]} and \texttt{[D] [E]} in the rule below can be merged.
\begin{eqnarray*}
& & \texttt{\$TRUE\$ If [A] [B] [C] then [D] [E]}; \\
& & \texttt{\$TRUE\$ [A] [B] [C]} \\
& \vdash & \texttt{\$TRUE\$ [D] [E]}
\end{eqnarray*}
So the rule becomes:
\begin{eqnarray*}
& & \texttt{\$TRUE\$ If [A] then [B]}; \\
& & \texttt{\$TRUE\$ [A]} \\
& \vdash & \texttt{\$TRUE\$ [B]}
\end{eqnarray*}
A variable in a rule is called a \emph{free variable}, if it appears only once. For example, the rule 
\begin{eqnarray*}
& & \texttt{If something is red, then tomorrow will be sunny}; \\
& & \texttt{[X] is red} \\
& \vdash & \texttt{Tomorrow will be sunny}
\end{eqnarray*}
contains a free variable \texttt{[X]}. 
We only consider rules with no more than 1 free variable and require that they cannot appear in the conclusion. Because they would allow arbitrary conclusions formed by substituting them with other sentences. For example, the rule below is not allowed because of the free variable \texttt{[X]} in the conclusion:
\begin{eqnarray*}
\texttt{\$TRUE\$ Today is sunny} \vdash \texttt{\$TRUE\$ Tommorow is [X]}
\end{eqnarray*}
In addition, a rule cannot contain a premise made of one single free variable. Because this premise can be satisfied by any sentence, and there is no point including it in the rule. For example, the rule below is not allowed because of the free variable \texttt{[X]}:

\begin{eqnarray*}
& & \texttt{\$TRUE\$ [X]}; \\
& & \texttt{\$TRUE\$ If [A] then [B]}; \\
& & \texttt{\$TRUE\$ [A]} \\
& \vdash & \texttt{\$TRUE\$ [B]}
\end{eqnarray*}

\end{document}